\documentclass[11pt]{article}
\usepackage{amsfonts}
\usepackage{hyperref}
\usepackage{graphicx}
\usepackage{subfigure}
\usepackage{epsfig}
\usepackage{amsmath}
\usepackage{amssymb}
\usepackage{enumerate}
\usepackage{mathabx}
\usepackage{mathrsfs}
\usepackage{tabularx,ragged2e,booktabs,caption}
\usepackage{enumitem}
\usepackage{bbm}
\usepackage{hujititlepage}
\usepackage{pdfpages}
\usepackage[noend]{algorithmic}
\usepackage{natbib}

\oddsidemargin=0.15in
\evensidemargin=0.15in
\topmargin=-.5in
\textheight=9in
\textwidth=6.25in

\newcommand{\DNF}{\mathrm{DNF}}

\newcommand{\DT}{\mathrm{DT}}

\newcommand{\x}{\mathbf{x}}
\newcommand{\z}{\mathbf{z}}

\usepackage{graphicx,subfigure}
\usepackage{epsfig}
\usepackage{hyperref}
\usepackage{amsmath}
\usepackage{amssymb}
\usepackage{algorithm}
\usepackage{algorithmic}
\usepackage{url}
\usepackage{enumerate}
\usepackage{amsfonts}
\usepackage{boxedminipage}
\usepackage{xcolor}
 \usepackage{framed}  
\usepackage{rotating}
\usepackage{array}
\usepackage{multirow}
\usepackage{color}
\usepackage{tikz}
\usepackage{pgfplots}
\usepackage{mathabx}
\usepackage{tabularx,ragged2e,booktabs,caption}

% algorithms
%
{
%\vspace{0.3cm}
\begin{center}
\begin{boxedminipage}{0.8\linewidth}
\begin{center}
\textbf{\texttt{#1}}
\end{center}
\rm
\begin{tabbing}
....\=...\=...\=...\=...\=  \+ \kill
} %
{\end{tabbing} 
\end{boxedminipage} \end{center} %\vspace{0.3cm}
}

{
\vspace{0.01cm}
\begin{center}
\begin{boxedminipage}{0.8\linewidth}
\begin{center}
\textbf{\texttt{#1}}
\end{center}
} %
{
\end{boxedminipage} \end{center} \vspace{0.3cm}
}

% theorems with small letters
\newtheorem{definition}{Definition}

\newtheorem{theorem}{Theorem}

\newtheorem{remark}{Remark}
\newtheorem{claim}{Claim}

% bf - letters

%bf - numbers

% bf - expressions

% mathbb
\newcommand{\prob}{\mathbb{P}}
\newcommand{\bE}{\mathbb{E}}

% bars

%mathrm

\newcommand{\poly}{\mathrm{poly}}

%  mathcal

\newcommand{\cY}{\mathcal{Y}}
\newcommand{\cX}{\mathcal{X}}
\newcommand{\cD}{\mathcal{D}}

\newcommand{\cB}{\mathcal{B}}

\newcommand{\cA}{\mathcal{A}}
\newcommand{\cH}{\mathcal{H}}

% miscellaneous

\newenvironment{proof}{\par\noindent{\bf Proof\ }}{\hfill\BlackBox\\[2mm]}
\newcommand{\BlackBox}{\rule{1.5ex}{1.5ex}}

% disjoint union
\makeatletter
\def\moverlay{\mathpalette\mov@rlay}
\def\mov@rlay#1#2{\leavevmode\vtop{%
   \baselineskip\z@skip \lineskiplimit-\maxdimen
   \ialign{\hfil$\m@th#1##$\hfil\cr#2\crcr}}}
\newcommand{\charfusion}[3][\mathord]{
    #1{\ifx#1\mathop\vphantom{#2}\fi
        \mathpalette\mov@rlay{#2\cr#3}
      }
    \ifx#1\mathop\expandafter\displaylimits\fi}
\makeatother

% machine learning

% Real Analysis
   % 1-dimension, 2-norm, 3-radius
   % same with radius 1

% Convex Analysis

% Linear Algebra

%boldsymbol

%DeclareMath

% ref
\renewcommand{\eqref}[1]{Equation~(\ref{#1})}

% handouts

\hbadness=10000
\vbadness=10000

\setlength{\oddsidemargin}{.25in}
\setlength{\evensidemargin}{.25in}
\setlength{\textwidth}{6in}
\setlength{\topmargin}{-0.4in}
\setlength{\textheight}{8.5in}

\newcommand{\handout}[5]{
   \renewcommand{\thepage}{#1-\arabic{page}}
   \noindent
   \begin{center}
   \framebox{
      \vbox{
    \hbox to 5.78in { {\bf (67577) Introduction to Machine Learning}
         \hfill #2 }
       \vspace{4mm}
       \hbox to 5.78in { {\Large \hfill #5  \hfill} }
       \vspace{2mm}
       \hbox to 5.78in { {\it #3 \hfill #4} }
      }
   }
   \end{center}
   \vspace*{4mm}
}

% lecture

% New exercise

% usage: \problemset{1}{\today}

% School solution

% usage: \schoolSolution{1}{Alon}

% Submitted solution

%usage: \exanswer{1}{Alon}

\begin{document}

%\title{Learning Using Local Membership Queries}
%\author{Galit Bary}
%\date{}							% Activate to display a given date or no date
%\maketitle

\title{Learning Using Local Membership Queries}
\author{Galit Bary}
\date{September 2015}
\maketitle

%\includepdf[pages=-]{thesis_hebrew4.pdf}

\begin{center}
\section*{\large{Abstract}}
\end{center}

Classic machine learning algorithms learn from labelled examples. For example, to design a machine translation system, a typical training set will consist of English sentences and their translation to French. There is a stronger model, in which the algorithm can also query for labels of new examples it creates. E.g, in the translation task, the algorithm can create a new English sentence, and request its translation from the user during training. This combination of examples and queries, that resembles human learning patterns, has been widely studied. Yet, despite many theoretical results, query algorithms are almost never used.
One of the main causes for this is a report~\citep{baum1992query} on very disappointing empirical performance of a query algorithm. These poor results were mainly attributed to the fact that the algorithm queried for labels of examples that are artificial, and impossible to interpret by humans. 

In this work we study a new model of \textit{local} membership queries  \citep{awasthi2012learning}, which tries to resolve the problem of artificial queries. In this model, the algorithm is only allowed to query the labels of examples which are close to examples from the training set. E.g., in translation, the algorithm can change individual words in a sentence it has already seen, and then ask for the translation. In this model, the examples queried by the algorithm will be close to natural examples and hence, hopefully, will not appear as artificial or random. In this work we focus on 1-local membership queries (i.e., queries of distance 1 from an example in the training sample). We show that 1-local membership queries are already stronger than the standard learning model.
We also present an experiment on a well known NLP task of sentiment analysis. In this experiment, the users were asked to provide, in a way that resembles 1-local queries, more information than merely indicating the label. We present results that illustrate that this extra information is beneficial in practice. 

\newpage
\begin{center}
\section*{Acknowledgments}
\end{center}
I would like to thank my advisor Prof. Shai Shalev-Shwartz for having me on his outstanding team and for his support and inspiration. I would also like to thank Amit Daniely, for his guidance and mentorship. His extensive knowledge and patience were invaluable. It has been a privilege to work with him.
 
To Alon Gonen, Nir Rosenfled, Yoav Wald, Yossi Arjevani, Nomi Vinokurov and Avishai Wagner for their remarkable friendship and counsel. To the NLP lab and especially Effi Levi for all his assistance in the empirical work and to my officemates Zahi Ajami and Dikla Cohn for their wonderful companionship.

I would like to thank my parents for all the love and support throughout the years, and to the Weisberg family for their help, especially Susan for her editorial comments. Last but not least, I would like to thank my husband Dov for his enduring support that is expressed on so many levels -- encouraging me during difficult times, editing my drafts and providing home cooked meals.
\newpage

\tableofcontents
\newpage

\section{Introduction} \label{intro}

% Intuitive introduction
\textit{How do humans learn}? 
Say we look at the process of a child learning how to recognize a cat. We can focus on two types of input. The first type of input is when a child's parent points at a cat and states ``Look, a cat!". The second type of input is an answer to the child's frequent question ``What is that?", which the child may pose when seeing a cat, but also when seeing a dog, a mouse, a rabbit, or any other small animal.

% History
These two types of input were the basis for the learning model originally suggested in the celebrated paper ``A theory of the learnable" \citep{valiant1984theory}. In Valiant's learning model, the learning algorithm has access to two sources of information - EXAMPLES and ORACLE. The learning algorithm can call EXAMPLES to receive an example with its label (sampled from the ``nature"). Additionally, the learning algorithm can use ORACLE, which provides the label of {\em any} example presented to it.
With these two input types, we can look at two models of learning: learning using only calls for EXAMPLES, and learning using calls for both EXAMPLES and ORACLE. The first is the standard Probably Approximately Correct (PAC) model. The second is the so called PAC+MQ (Membership Queries) model. There has been a lot of theoretical work searching for the limits of the additional strength of membership queries. The use of membership queries in addition to examples was proven to be stronger than the standard PAC model in many cases \citep{angluin1987learning, blum1992fast, bshouty1995exact,jackson1994efficient}(see section  \ref{prev}). 

% Not used
Despite that the MQ model seems much stronger, both intuitively and formally, it is rarely used in practice. 
%Why not used?
This is commonly believed to result from the fact that in many cases it is not easy to implement MQ algorithms, that can create new and artificial examples to be labeled as part of the training phase. This problem of labeling artificial examples was highlighted by the experiment of  \cite{baum1992query}. Baum and Lang implemented a membership query algorithm proposed by \cite{baum1991neural} for learning halfspaces . Their algorithm had very poor results, which was attributed to the fact that the algorithm created artificial and unnatural examples, which resulted in a noisy labeling. We elaborate on this experiment and criticize its conclusions in section \ref{prev}.

% Local MQ
A suggested solution to the problem of unnatural examples was proposed by \cite{awasthi2012learning}. They suggested a mid-way model of learning with queries, but only restricted ones. The queries that their model allows the algorithm to ask are only \textit{local} queries, i.e., queries that are close in some sense to examples from the sample set. 
Hopefully, examples which are similar to natural examples will also appear to be natural, or at least close to natural, and in any case will be far from appearing random or artificial.
In their work, Awasti et al.  started to investigate the power and the limitations of this model of local queries. They proved positive results on learning sparse polynomials with $O(\log(n))$-local queries under what they defined as \textit{locally smooth distributions}\footnote{locally $\alpha$-smooth distributions can be defined as the class of distributions for which the logarithm of the density function is $\log(\alpha)$-Lipschitz with respect to the Hamming distance.}, which in some sense generalize the uniform and product distributions.
They also proposed an algorithm that learns DNF formulas under the uniform distribution in quasi-polynomial time using only $O(\log(n))$-local queries.

%Our work - upper bound

The exciting ideas of \cite{awasthi2012learning} leave many directions for future work. One issue is that their analysis holds for a restricted family of distributions. While these results provide evidence of the excessive power of local queries, the distributional assumptions are rather strong.

Our work follows Awasthi et al., and is focused on 1-local queries, which are the closest to the original PAC model.
We formulate an arguably natural distributional assumption, and present an algorithm that uses 1-local membership queries to learn DNF formulas under this assumption. 
%Lower bound
We also provide a matching lower bound: Namely, we prove that learning DNFs under our assumption is hard without the use of queries, assuming that learning decision trees is hard. This is the first example of a natural problem in which 1-local queries are stronger than the vanilla PAC model (it complements the work of Awasthi et al. who showed a similar result for a highly artificial problem).

%Empirical
Finally, we provide some empirical evidence that using local queries can be helpful in practice, and importantly, that the implementation of the queries is easy, straightforward, and can be acquired by crowdsourcing without the use of an expert. We present a method for using local queries to perform a user-induced feature selection process, and present results of this protocol on the task of sentiment analysis of tweets. 
Our results show that by acquiring a more expressive data set, using (a variant of) 1-local queries, we can achieve better results with fewer examples. Based on the fact that a smaller data set is sufficient, we gain twice: we need less manpower for the labeling process
and less computing power for the training process.
We note that similar experiments also present encouraging results along this line \citep{raghavan2007interactive, raghavan2005interactive, settles2011closing, druck2009active}. This supplies more evidence that such query-based methods can be useful in practice.
 \pagebreak

\section{Previous Work} \label{prev}
\subsection{PAC}

Valiant's Probably Approximately Correct (PAC) model of learning \citep{valiant1984theory} formulates the problem of learning a concept from examples. Examples are chosen according to a fixed but unknown and arbitrary distribution on the instance space. The learner's task is to find a prediction rule. The requirement is that with high probability, the prediction rule will be correct on all but a small fraction
of the instances.

A few positive results are known in this model - i.e., concept classes that have been proven to be PAC-learnable. Maybe the most significant example is the class of halfspaces. More examples include relatively weak classes such as DNFs and CNFs with constantly many terms~\citep{valiant1984theory}, and rank $k$ decision trees \citep{ehrenfeucht1989learning} for a constant $k$.

Despite these positive results, most PAC learning problems are probably intractable. In fact, beyond the results mentioned above, almost no positive results are known. Furthermore, several negative results are known. 
For example, learning automatons, logarithmic depth circuits, and intersections of polynomially many halfspaces are all intractable, assuming the security of various cryptographic schemes \citep{kearns1994cryptographic, klivans2006cryptographic}.
In \citep{daniely2014average,daniely2014complexity,daniely2013more}, it is shown that learning DNF formulas, and learning intersections of $\omega(\log(n))$ halfspaces are intractable under the assumption that refuting random $k$-SAT is hard.

\subsection{Membership Queries}

The PAC model is a ``passive" model in which the learner receives a random data set of examples and their labels  and then
outputs a classifier. A stronger version would be an active model in which the learner gathers information about the world by asking questions and receiving responses. Several types of active models have been proposed: the Membership Query Synthesis, Stream-Based Selective Sampling, and Pool-Based Sampling \citep{settles2010active}. 
Our work is in the area of the ``Membership Queries" (MQ) model which was presented in \citep{valiant1984theory}. In this model the learner is allowed to query for the label of any particular example that it chooses (even examples that are not in the given sample).

This model has been shown to be stronger in several scenarios.
Some examples of concept classes that have been proven to be PAC-learnable only if membership queries are available include:
The class of Deterministic Finite Automatons \citep{angluin1987learning}, the class of k-term DNF for $k= \frac{\log(n)}{\log(\log(n))}$ \citep{blum1992fast}, the class of decision trees and k-almost monotone-DNF formulas \citep{bshouty1995exact}, the class of intersections of k-halfspaces~\citep{baum1991neural} and the class of DNF formulas under the uniform distribution \citep{jackson1994efficient}. The last of these results was built upon 
Freund's boosting algorithm \citep{freund1995boosting} and the Fourier-based technique for learning using membership queries due to \citep{kushilevitz1993learning}.
 
It should be noted that there are cases in which the additional strength of MQ does not help. E.g., in the case of learning DNF and CNF formulas \citep{angluin1995won}, and in the case of distribution free agnostic learning (although in the distribution-specific agnostic setting membership
queries do increase the power of the learner) \citep{feldman2009power}.

\subsection{Baum and Lang}
As discussed above, there has been widespread and significant theoretical work in the PAC + MQ model. On the other hand, almost no practical work on implementing these ideas has been done.
A well-known exception is the work of \cite{baum1992query}. 
They applied a variation of the MQ algorithm for learning a linear classifier proposed in \cite{baum1991neural}. This algorithm uses the idea that given two examples, one positive and one negative, and a query oracle, it is possible to find an approximately accurate separating halfspace by using a binary search on the line between the positive and negative examples. Their experiment attempts to evaluate this idea in practice. The task that they chose is the task of binary digit classification. The algorithm would receive two examples, one positive and one negative (say, an image of the digit 4 and an image of the digit 7) and would return the weights of the halfspace. The generalization error of the halfspace would then be tested on other examples from the data.  The query technique they used in the experiment is different than in the original algorithm: ``A direct implementation of this algorithm would repeatedly flash images on the screen during the binary search and would require the test subject to type in the correct label for each image. Because this process seemed likely to be error prone, we instead provided an interface that permitted the test subject to scan through the input space using the mouse and then click on an image that seemed to lie right at the edge of recognizability" (from \cite{baum1992query}).

For an example of what the users saw on the screen see figure \ref{fig:L_B_digits}.
\begin{figure}[ht]
\centering
\includegraphics[width=.4\textwidth]{{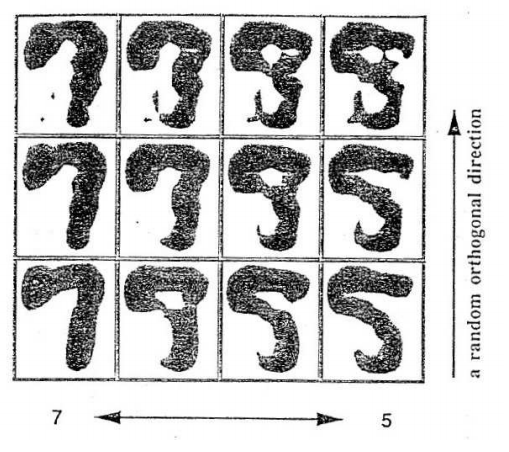}}
\caption{An example taken from \citep{baum1992query}: the images the user saw on the screen for the digits 5 and 7}
\label{fig:L_B_digits}
\end{figure}

They compared the performance of their algorithm to five other variants, three classic PAC (sample based) algorithms:  Backpropogation, Perceptron and simplex, and two baselines: the first returns the perpendicular bisection of the line segments connecting the two examples, and the second returns a randomly oriented hyperplane through the midpoint of the line. The query learning algorithm uses the additional information obtained from the users as described above, while the three PAC algorithms use additional examples drawn from the data set. 
All three PAC algorithms outperformed the query-based algorithm. More surprisingly, even the baseline of choosing the perpendicular bisection line had significantly better results than the halfspace created by the query algorithm. The only method that was worse than the query based method was the random bisector method. They suggest that the reason for the poor results is that the question the users had to answer, to find the boundary pattern, lay outside the range of the human competence. 

This work led many to the conclusion that membership queries  are not useful in practice (\cite{settles2010active,balcan2006agnostic,dasgupta2004analysis} and more). We argue that there are several problems with this conclusion. First and foremost, the task that the users were asked to perform (scanning through images and finding the boundary between digits) is not an intuitive task, and it is very easy to think of other variants for queries which would be more suitable. It is therefore not surprising that the labeling turned out to be noisy considering the nature of the question at hand. Second, their algorithm did not use the PAC abilities; it used queries but did not use the additional option to sample extra points for the data.

\subsection{Local Membership Queries }

Several suggestions have been made of ways to solve the problem of the algorithm's generation of unnatural examples. The most common one was to drop the whole framework of membership queries and focus on the other types of active learning: stream-based and pool-based. The idea is to filter existing examples taken from a large unlabeled data set drawn from the distribution rather than creating artificial examples.
Another suggestion is to give the human annotator the option of answering ``I don't know", or to be tolerant of some incorrect answers. The theoretical framework is the model of an \textit{incomplete membership oracle} in which the answers to a random subset of the queries may be missing. This notion was first presented in \cite{angluin1994randomly}, and then followed by the notion of \textit{limited} MQ and \textit{malicious} MQ. (\cite{angluin1997malicious,blum1995learning,sloan1994learning,bisht2008learning}).

The third method is to restrict the examples that the learning algorithm can query to examples that are similar to examples drawn from the distribution. This is formalized in the work of \cite{awasthi2012learning}. They present the concept of learning using only \textit{local} membership queries. This framework deals with the problem raised by \citep{baum1992query}. By questioning about examples which are close to examples from the distribution we escape the problem of generating random or non-classifiable examples.

The work of Awasthi et al. focused on the n-dimensional boolean hyper-cube $\cX = \{-1,1\}^n$ and on $O(\log(n))$-local queries, i.e., the learning algorithm is given the option to query the label of any point for which there exists a point in the training sample with hamming distance lower than $O(\log(n))$.
The model they suggested is a mid-way model between the PAC model (0-local queries) and the PAC + MQ model (n-local queries). Their main result is that t-sparse polynomials are learnable under \textit{locally smooth} distributions  using $O\left(\log(n)+\log(t) \right)$-local queries. Another interesting result that they presented is that the class of DNF formulas is learnable under the uniform distribution in quasi-polynomial time ($n^{O(\log\log n})$) using $O(\log(n))$-local queries. They also presented some results regarding the strength of local MQ. They proved that under standard cryptographic assumptions, using $(r+1)$-local queries is more powerful than using $r$-local queries (for every $1\leq r\leq n-1$). They also showed that local queries do not always help. They showed that if a concept class is agnostically learnable under the uniform distribution using $k$-local queries (for constant $k$) then it is also agnostically learnable (under the uniform distribution) in the PAC model.

\subsection{Other Related Work} 
In section~\ref{empirical}, we give some experimental evidence that the use of extra information from the user is helpful.
There have been other works along the same line.
\citet{druck2009active} propose a pool-based active learning approach in
which the user provides “labels” for input features,
rather than instances. The users are asked to provide a ``label" for input features, where a labeled input feature denotes that a particular feature is highly indicative of a particular label. 
Following that, \cite{settles2011closing} presented an active
learning annotation interface, in which the users label instances and features simultaneously. At any point in time, an instance and a list of features for each label is presented on the screen. The user can choose to either label the instance, choose a feature from the list as being indicative, or add a new feature of his or her choice. 
Another similar work is of \cite{raghavan2007interactive} and \cite{raghavan2005interactive}. They studied the problem of
tandem learning where they combine uncertainty sampling for instances along
with co-occurrence-based interactive feature selection.
All the above experiments were conducted on the text domain and the features were always unigrams. 
The experiments presented encouraging results of using the human annotators, either by reaching better results, or by showing that the excessive use of annotators can reduce the size of the data set, and sometimes both.

\newpage
\section{Setting} 
\subsection{The PAC Model}
Our framework is an extension of the PAC (Probably Approximately Correct) model of learning. Before introducing it, we will briefly review PAC learning. We will only consider binary classification where the instance space is $\cX=\cX_n=\{-1, 1\}^n$ and the label space is $\cY=\{0,1\}$.
A learning problem is defined by a hypothesis class $\cH\subset \{0,1\}^{\cX}$. 
We assume that the learner receives a \emph{training set}
\[
S = \{(\x_1,h^\star(\x_1)) , (\x_2,h^\star(\x_2)) , \ldots , (\x_m,h^\star(\x_m)) \} \in (\cX \times \cY)^m
\]
where the $\x_i$'s are sampled i.i.d. from some {\em unknown} distribution $\cD$ on $\cX$ and $h^\star:\cX\to\cY$ is some {\em unknown} hypothesis. We will focus on the so-called realizable case where $h^\star$ is assumed to be in $\cH$. The learner returns (a description of) a hypothesis $\hat{h} : \cX \rightarrow \cY$.
The goal is to approximate $h^\star$, namely to find $\hat{h}:\cX\to \cY$ with {\em loss} as small as possible, where the loss is defined as $L_{\cD,h^\star}(\hat{h})=\prob_{\x\sim\cD}\left(\hat{h}(\x)\ne h^\star(\x)\right)$. 
We will require our algorithms to return a hypothesis with loss $< \epsilon$ in time that is polynomial in $n$ and $\frac{1}{\epsilon}$. Concretely,

\begin{definition}[Learning algorithm] \label{learning alg}
We say that a learning algorithm $\cA$ {\bf PAC learns} $\cH$ if
\begin{itemize}
\item
There exists a function $m_\cA \left(n,\epsilon\right)\le \poly\left(n,\frac{1}{\epsilon}\right)$, such that 
for every distribution $\cD$ over $\cX$, every $h^{\star}\in\cH$ and every $\epsilon>0$, if $\cA$ is given a training sequence
\[
S = \{(\x_1,h^\star(\x_1)) , (\x_2,h^\star(\x_2)) , \ldots , (\x_m,h^\star(\x_m)) \} 
\]
where the $\x_i$'s are sampled i.i.d. from $\cD$ and $m \ge m_\cA (n,\epsilon)$, then with probability of at least $\frac{3}{4}$ (over the choice of $S$)\footnote{The success probability can be amplified to $1-\delta$ by repetition.}, the output $\hat{h}$ of $\cA$ satisfies 
$L_{\cD, h^\star}(\hat{h}) < \epsilon$.
\item Given a training set of size $m$
\begin{itemize}
\item $\cA$ runs in time $\poly(m,n)$. 
\item The hypothesis returned by $\cA$ can be evaluated in time $\poly(m,n)$.
\end{itemize}
\end{itemize}

\end{definition}

\begin{definition}[PAC learnability]
We say that a hypothesis class $\cH$ is \textbf{PAC learnable} if there exists a PAC learning algorithm for this class.
\end{definition} 

\subsection{(Local) Membership Queries Model}
Learning with membership queries is an extension of the PAC model in which the learning algorithm is allowed to  \emph{query} the labels of specific examples in the domain set. 
A membership query is a call to an ORACLE which receives as input some $\x \in \cX$ and returns $h^\star(\x)$. This is called a ``membership query" because the ORACLE returns $1$ if $\x$ is in the set of examples positively labeled by $h^\star$.

\begin{definition} [Membership-Query Learning Algorithm] \label{MQ-alg}
We say that a learning \\ algorithm $\cA$ {\bf learns $\cH$ with membership queries} if
\begin{itemize}
\item
There exists a function $m_\cA \left(n,\epsilon\right)\le \poly\left(n,\frac{1}{\epsilon}\right)$, such that 
for every distribution $\cD$ over $\cX$, every $h^{\star}\in\cH$ and every $\epsilon>0$, if $\cA$ is given access to \emph{membership queries}, and a training sequence
\[
S = \{(\x_1,h^\star(\x_1)) , (\x_2,h^\star(\x_2)) , \ldots , (\x_m,h^\star(\x_m)) \} 
\]
where the $\x_i$'s are sampled i.i.d. from $\cD$ and $m \ge m_\cA (n,\epsilon)$, then with probability of at least $\frac{3}{4}$ (over the choice of $S$), the output $\hat{h}$ of $\cA$ satisfies 
$L_{\cD , h^\star}(\hat{h}) < \epsilon$.
\item Given a training set of size $m$
\begin{itemize}
\item $\cA$ asks at most $\poly(m,n)$ membership queries.
\item $\cA$ runs in time $\poly(m,n)$.
\item The hypothesis returned by $\cA$ can be evaluated in time $\poly(m,n)$.
\end{itemize}
\end{itemize}
\end{definition} 
Our work will deal with a specific type of membership queries, ones that are in some way close to examples that are already in the sample. Concretely, we say that a membership query $\x\in\cX$ is {\bf $q$-local} if there exists a training example $x'$ whose Hamming distance\footnote{We only consider the instance space $\{-1,1\}^n$, so the hamming distance is natural. However, the definition can be extended to other metrics.}  from $\x$ is at most $q$. 

\begin{definition} [Local-Query Learning Algorithm] \label{L-MQ-alg}
We say that a learning algorithm $\cA$ {\bf learns $\cH$ with $q$-local membership queries} if $\cA$ learns $\cH$ with membership queries that are all $q$-local.
\end{definition} 

\begin{definition}
We say that a hypothesis class $\cH$ is \textbf{q-LQ learnable} if there exists a q-Local-query learning algorithm for this class.
\end{definition}

\subsubsection*{Learning Under a Specific Family of Distributions}
In the classic PAC model discussed above, the learning algorithm needs to be probably-approximately correct for \emph{any distribution} $\cD$ on $\cX$ and \emph{any hypothesis} $h^\star \in \cH$. In this work we will have guarantees with respect to more restricted families. 
We will say that {\bf $\cA$ learns $\cH$ w.r.t  a family $\mathscr{D}$} of pairs $(\cD , h)$ of distributions on $\cX$ and hypotheses in $ \cH$ if the following holds: The algorithm $\cA$ satisfies the requirements of a learning algorithm whenever the pair $\cD$ and $h$ in the definition of a learning algorithm belongs to $\mathscr{D}$. Similar considerations apply also to the notion of learning with (local) membership queries.

\section{Learning DNFs with Evident Examples Using 1-local MQ}
\subsection{Definitions and Notations}

\label{sec:def}
\begin{definition} [Disjunction Normal Form Formula]
\label{DNF formula}
A {\bf{DNF term}} is a conjunction of literals.
A {\bf{DNF formula}} is a disjunction of DNF terms.  
\end{definition} 
Each DNF formula over n variables naturally induces a function $h : \{-1,1\}^n \to \{0,1\}$ (when we standardly identify $\{0,1\}$ with ``True" and ``False"). We denote by $h_F$ the function induced by the DNF formula $F$.

\begin{remark}
We will look at succinctly described hypotheses (e.g., a $\DNF$ with a small number of terms) and on small, but non-negligible probabilities. For simplicity, we will take the convention that {\bf small} is at most $n^2$ and {\bf non negligible} is at least $\frac{1}{n^3}$. All of our results can be easily generalized to the case where ``small" and ``non-negligible" are defined as $\le n^{c_1}$ and $\ge \frac{1}{n^{c_2}}$ for any constants $c_1,c_2>0$.
\end{remark}

\begin{definition} \label{DNF hypothesis}
Denote by $\cH_{\DNF}$ the hypothesis class of all functions that can be realized by a $\DNF$ with a small number of terms. That is $$\cH_{\DNF} = \{h_F : \text{ F is a } \DNF \text{ formula with at most } n^2 \text{ terms}\}$$
\end{definition}

Intuitively, when evaluating a DNF formula on a given example, we check a few conditions (corresponding to the formula's terms), and deem the example positive if one of the conditions holds.
We will consider the case that for each of these conditions, there is some chance to see a ``prototype example". Namely, an example that satisfies only this condition in a strong (or evident) way.

\begin{definition} 
\label{satisfies evidently}
Let $F=T_1 \vee T_2 \vee \ldots \vee T_d$ be a DNF formula. An example $\x\in \{-1,1\}^n$ satisfies a term $T_i$ (with respect to the formula $F$) {\bf evidently} if :
\begin{itemize}
\item
It satisfies $T_i$. (In particular, $h_F(\x)=1$)
\item
It does {\bf not} satisfy any other term $T_k$ (for $k\neq i$) from F.
\item
No coordinate change will turn $T_i$ False and another term $T_k$ True.
Concretely, if for $j\in [n]$ we denote $\x^{\oplus j} = (x_1, \ldots, x_{j-1}, -x_j, x_{j+1}, \ldots ,x_n)$, then for every coordinate $j\in [n]$, if $\x^{\oplus j}$ satisfies $F$ (i.e. if $h_F(\x^{\oplus j} )=1$) then $\x^{\oplus j}$ satisfies $T_i$ and only $T_i$. 
 \end{itemize}
\end{definition} 
The first distributional assumption that we consider is that each positive example satisfies one term evidently.

\begin{definition} 
\label{distribution realized by DNF with evident}
A pair $(\cD,h^\star)$ of a distribution $\cD$ over $\{-1,1\}^n$ and $h^\star : \{-1,1\}^n \to \{0,1\}$ is {\textbf{realized by a small DNF with evident examples}} if there exists a DNF formula $F=T_1 \vee T_2 \vee \ldots \vee T_d$ over $\{-1,1\}^n$ with $d \leq n^2$ such that $h^\star = h_F$ and additionally, every positive example  $ \x\in \{-1,1\}^n$ with $\cD(\x)>0$ satisfies one of $F$'s terms evidently.
\end{definition} 

 One of the assumptions in our definition is that the target function can be realized by a $\DNF$ formula for which every example satisfies at most one term. For a function that is realized by a decision tree this always holds. So, in a sense, our assumption holds for functions that can be realized by a ``stable" decision tree.
 
The above definition makes a strong assumption, namely that \textit{every} positive example is an evidence for one term. The next definition relaxes that assumption and only assumes that for every term there is a non-negligible probability to see an evident example. 

\begin{definition} 
\label{distribution weakly realized by DNF with evident }
A pair $(\cD,h^\star)$ of a distribution $\cD$ over $\{-1,1\}^n$ and $h^\star : \{-1,1\}^n \to \{0,1\}$ is {\textbf{weakly realized by a small $\DNF$ with evident examples}} if there exists a DNF formula $F=T_1 \vee T_2 \vee \ldots \vee T_d$ over $\{-1,1\}^n$  with $d \leq n^2$ such that $h^\star = h_F$ and for every term $T_i$ there is a non-negligible\footnote{Recall that non-negligible is at least $\frac{1}{n^3}$  } probability to see an example that satisfies this term evidently.
\end{definition}

For example, our assumption holds for every distribution $\cD$, provided that $h^\star$ can be realized by a DNF formulas in which any pair of different terms contains two opposite literals.

\subsection{Upper Bounds}

We will now present two learning algorithms that use 1-LQ, and prove that each of these algorithms learn the class  $\cH_{\DNF}$ with respect to the families of distributions defined above. Both algorithms use the following claim that follows directly from definition~\ref{satisfies evidently}

\begin{claim}\label{propeq}
Let $F=T_1 \vee T_2 \vee \ldots \vee T_d$ be a $\DNF$ formula over $\{-1,1\}^n$. 
Then for every $\x \in\{-1,1\}^n$ that satisfies a term $T_i$ evidently (with respect to $F$), for every $j \in [n]$ it holds that:
$$
h_F(\x^{\oplus j} )=1 \Longleftrightarrow  \text{ the term $T_i$ does not contain the variable } x_j
$$
\end{claim}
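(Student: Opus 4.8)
The plan is to prove the two directions of the biconditional separately, relying on the elementary fact that the truth value of a conjunction of literals $T_i$ on an input depends only on the coordinates corresponding to variables that actually appear in $T_i$. Throughout, I use that $\x$ satisfies $T_i$ (the first bullet of Definition~\ref{satisfies evidently}), and I keep the third (``evidence under single flips'') bullet in reserve for the harder direction.

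For the direction ($\Leftarrow$), suppose $T_i$ does not contain the variable $x_j$. Since $\x$ and $\x^{\oplus j}$ agree on every coordinate other than $j$, and $T_i$ does not read coordinate $j$, the term $T_i$ evaluates identically on both inputs; as $\x$ satisfies $T_i$, so does $\x^{\oplus j}$. A single satisfied term already makes the disjunction true, so $h_F(\x^{\oplus j})=1$, as required.

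For the direction ($\Rightarrow$), I would argue by contraposition: assume $T_i$ \emph{does} contain the variable $x_j$ and show $h_F(\x^{\oplus j})=0$. Because $\x$ satisfies $T_i$, the unique literal on $x_j$ appearing in $T_i$ is satisfied by $\x$; flipping the $j$-th coordinate falsifies exactly that literal, so $\x^{\oplus j}$ fails to satisfy $T_i$. Now I invoke the third bullet of Definition~\ref{satisfies evidently}: it asserts that whenever $\x^{\oplus j}$ satisfies $F$, it must satisfy $T_i$ (and only $T_i$). Taking the contrapositive of this implication, since $\x^{\oplus j}$ does not satisfy $T_i$, it cannot satisfy $F$, i.e. $h_F(\x^{\oplus j})=0$. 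This is precisely the negation of the left-hand side, closing the contrapositive.

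The only genuine content — and the step I expect to be the main (mild) obstacle — is the ($\Rightarrow$) direction, where the naive observation ``flipping a variable of $T_i$ breaks $T_i$'' shows only that $\x^{\oplus j}$ leaves $T_i$, not that no \emph{other} term $T_k$ simultaneously becomes satisfied. This gap is exactly what the evidence hypothesis is designed to close: the third bullet guarantees that no coordinate flip can turn $T_i$ false while turning some other term true, so leaving $T_i$ forces leaving all of $F$. Everything else is routine bookkeeping about literals and coordinates.
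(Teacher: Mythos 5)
Your proof is correct, and it is exactly the argument the paper intends: the paper offers no explicit proof, stating only that the claim ``follows directly from definition~\ref{satisfies evidently}'', and your two directions --- the ($\Leftarrow$) via locality of $T_i$'s evaluation, and the ($\Rightarrow$) via the contrapositive of the third bullet of the definition --- are precisely the routine unpacking that remark presupposes. You also correctly identified that the third bullet is the essential ingredient ruling out another term $T_k$ becoming satisfied after the flip, which is the only nontrivial point.
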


\begin{algorithm}[h]
\caption{ Create a $\DNF$ formula }
\textbf{Input: } $S \in (\{-1,1\}^n  \times\{0,1\})^m $\\
\textbf{Output: } A $\DNF$ formula $H$
\begin{algorithmic}\label{alg1}

\STATE start with an empty $\DNF$ formula $H$

\FORALL {$(\x,y) \in S$}
\IF {$y=1$}
\STATE define $T = x_1\wedge \overline{x_1}\wedge x_2\wedge \overline{x_2}\wedge \ldots \wedge x_n \wedge \overline{x_n} $ \FOR{$ 1\leq j \leq n$}
\STATE query $\x^{\oplus j}$ (to get $h^\star(\x^{\oplus j})$)
\IF{$h^\star(\x^{\oplus j})=1$}
\STATE remove $x_j$ and $\overline{x_j}$ from $T$
\ENDIF
\IF{$h^\star(\x^{\oplus j})=0$}
\IF{$x_j=1$}
 \STATE remove $\overline{x_j}$ from $T$
\ENDIF
\IF{$x_j=0$}
 \STATE remove $x_j$ from $T$
\ENDIF
\ENDIF

\ENDFOR

\ENDIF

\STATE $H = H \vee T$

\ENDFOR
\RETURN{H}

\end{algorithmic}
\end{algorithm}

\begin{theorem}
\label{learnable evident dist} 
The hypothesis class $\cH_{\DNF}$ is 1-LQ learnable with respect to distributions that are realized by a $\DNF$ with evident examples.
\end{theorem}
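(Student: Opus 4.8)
The plan is to show that Algorithm~\ref{alg1} meets all the requirements of a $1$-local query learning algorithm for $\cH_{\DNF}$ under the stated family. I would split the argument into a correctness part, proving that the returned formula $H$ has loss $<\epsilon$ with probability at least $3/4$, and a resource part, verifying the locality, query-count, runtime, and evaluation bounds of Definition~\ref{L-MQ-alg}.

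The conceptual heart is to show that each term the algorithm builds is an \emph{exact} copy of a term of the target formula $F = T_1 \vee \ldots \vee T_d$. Fix a positive training example $\x$; by the distributional assumption it satisfies exactly one term $T_i$ evidently. For each coordinate $j$ the algorithm queries the point $\x^{\oplus j}$ and reads off $h^\star(\x^{\oplus j})$. By Claim~\ref{propeq}, this answer tells us precisely whether the variable $x_j$ occurs in $T_i$: if the answer is $1$ the variable is absent and both literals are dropped, and if the answer is $0$ the variable is present, with its sign forced by the value of $x_j$ in $\x$ (since $\x$ satisfies $T_i$, only the consistent literal is possible). Hence the constructed term equals $T_i$ literal-for-literal, and the output is $H = \bigvee_{i \in I} T_i$, where $I \subseteq [d]$ indexes exactly those terms that some positive sample point satisfies evidently.

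Since $I \subseteq [d]$, the formula $H$ is a sub-disjunction of $F$, so $h_H(\x)=1$ forces $h_F(\x)=1$: the algorithm never makes a false positive, and every error is a false negative arising from an undiscovered term. Using that every positive example with $\cD(\x)>0$ satisfies exactly one term evidently, I would partition the positive part of the support into the disjoint evident sets $E_i$ with masses $p_i = \prob_{\x\sim\cD}[\x\in E_i]$, and observe that the loss is exactly $\sum_{i \notin I} p_i$. The remaining step is a coupon-collector estimate: a term with $p_i \ge \epsilon/(2d)$ is missed by $m$ samples with probability at most $(1-p_i)^m \le e^{-\epsilon m/(2d)}$, so a union bound over the $d \le n^2$ terms shows that for $m = O\big(\tfrac{d}{\epsilon}\log d\big) = O\big(\tfrac{n^2\log n}{\epsilon}\big)$ all such ``heavy'' terms are discovered with probability at least $3/4$. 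On that event the only undiscovered terms are ``light,'' contributing at most $d \cdot \tfrac{\epsilon}{2d} = \tfrac{\epsilon}{2} < \epsilon$ to the loss, which is polynomial sample complexity as required.

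Finally I would check the efficiency requirements, which are routine: the algorithm issues at most $mn$ queries, each of the form $\x^{\oplus j}$ and therefore $1$-local; it runs in time $O(mn)$; and $H$ has at most $m$ terms, so it is evaluable in time $\poly(m,n)$. I expect the main obstacle to be bookkeeping rather than genuine difficulty: one must make sure the exact-reconstruction argument truly invokes \emph{every} clause of the definition of ``evidently'' — especially the flip-stability clause, which is exactly what Claim~\ref{propeq} packages — and that the decomposition of the positive region into disjoint evident sets is fully justified from the distributional assumption before the sampling bound is applied.
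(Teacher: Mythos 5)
Your proposal is correct, and its core is identical to the paper's proof: both use Claim~\ref{propeq} to show that for each positive sample point, querying the $n$ neighbors $\x^{\oplus j}$ lets Algorithm~\ref{alg1} reconstruct literal-for-literal the unique term the point satisfies evidently, so that $H$ is a sub-disjunction of $F$, false positives are impossible, and the loss equals the $\cD$-mass of positive examples whose terms went undiscovered. The only genuine divergence is the concluding probabilistic step. You condition on the event that every ``heavy'' term with $p_i \ge \epsilon/(2d)$ is hit by the sample, using $(1-p_i)^m \le e^{-p_i m}$ and a union bound over at most $d$ terms, and on that event bound the loss deterministically by $d \cdot \frac{\epsilon}{2d} = \frac{\epsilon}{2}$. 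The paper instead computes $\bE_{S \sim \cD^m}[L_{\cD,h^\star}(\hat h)] \le \frac{\epsilon}{2} + d\, e^{-m\epsilon/(2d)} < \epsilon$ (splitting the sum $\sum_i p_i(1-p_i)^m$ at the same threshold $\epsilon/(2d)$) and then applies Markov's inequality. Both routes need $m = O\bigl(\frac{d}{\epsilon}\log\frac{d}{\epsilon}\bigr)$ samples, but yours delivers loss $< \epsilon$ with probability $3/4$ directly, whereas the paper's Markov step only yields $L_{\cD,h^\star}(\hat h) < 4\epsilon$ with probability $3/4$, implicitly requiring a rescaling of $\epsilon$ to meet Definition~\ref{learning alg} exactly; the expectation route, in exchange, avoids defining a good event and treats all terms uniformly in one calculation. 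One small point in your favor: you explicitly justify that the evident sets $E_i$ are disjoint under Definition~\ref{distribution realized by DNF with evident} (each positive support point satisfies exactly one term, and $H$'s terms are among $F$'s), which makes your identity $L_{\cD,h^\star}(\hat h) = \sum_{i \notin I} p_i$ exact, where the paper settles for the union-bound inequality $\le \sum_{i=1}^d p_i\,\mathbbm{1}_{A_i}$, which suffices for its purposes.
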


\begin{proof}
We will prove that algorithm~\ref{alg1} learns $\cH_{\DNF}$ with 1-local membership queries.
First, it is easy to see that this algorithm is efficient: For a training set of size $m$ the algorithm asks for at most $n\cdot m$ $1$-local membership queries, and runs in time $O(nm)$. Likewise, the hypothesis that the algorithm returns is a $\DNF$ formula with at most m terms and every term is of size at most n, therefore it can be evaluated in time polynomial in $mn$. 

Now, let $\cD$ be a distribution on $\{-1,1\}^n$ and $h^\star :  \{-1,1\}^n \to  \{0,1\}$ be a hypothesis such that the pair $(\cD, h^\star)$ is realized by a small $\DNF$ with evident examples. Let $F=T_1 \vee T_2 \vee \ldots \vee T_d$ be that small $\DNF$ formula, (in particular  $h^\star = h_F$ and $d \leq n^2$). For $\epsilon > 0$ we take a sample $S = \{(\x_i,h^\star(\x_i)\}_{i=1}^m$ where $\{\x_i\}_{i=1}^m$ are sampled i.i.d from 
$\cD$ and  $m = \frac{2n^2}{\epsilon} \log \frac{2n^2}{\epsilon} \geq \frac{2d}{\epsilon} \log \frac{2d}{\epsilon}$.

Let $H$ be the $\DNF$ formula returned by the algorithm after running on $S$, and let $\hat{h}$ be the function induced by $H$. We will prove that with probability of at least 3/4 (over the choice of the examples) $L_{\cD,h\star}(\hat{h}) < 4\epsilon$. 

From the assumption on the distribution we get that every instance $\x$ that satisfies the formula (in our case every $\x$ such that $(\x,1)\in S$), satisfies exactly one term $T$. For every one of these positive instances from $S$, we will show that we add that exact term to $H$. For every such $\x$ we start with a full term (containing all the possible literals) and then for every $j \in [n]$, at iteration $j$:
\begin{itemize}
\item
if $h^\star(\x) = h^\star(\x^{\oplus j} )=1$ we know from claim~\ref{propeq} that the variable $x_j$ cannot appear in $T$ - so we remove it and its negation from the current term.
\item
if $h^\star(\x) = 1$ and $h^\star(\x^{\oplus j} )=0$ we know that either $x_j$ or $\overline{x_j}$ appears in $T$ and we remove the one that cannot appear in $T$ according to the value of $x_j$.
\end{itemize}
After $n$ iterations we get exactly $T$ - the term that $\x$ satisfies evidently. 
Therefore - $H$ will contain every term from $F$ for which there was an instance $\x$ in $S$ that satisfies it - other then that $H$ will contain no other terms. In other words, $$\mathop{\prob}_{\x \sim \cD}[h^\star(\x)=0 \wedge \hat{h}(\x)=1] = 0$$
and we get that 
\begin{gather*}
L_{\cD,h^\star}(\hat{h}) = \mathop{\prob}_{\x \sim \cD}[h^\star(\x) \neq \hat{h}(\x)] = \mathop{\prob}_{\x \sim \cD}[h^\star(\x)=1 \wedge \hat{h}(\x)=0]
\end{gather*}
Denote by $p_i$ the probability to sample $\x$ (from $\cD$) that will satisfy $T_i$, and let $A_i$ be the event that $S$ did not contain any $\x$ which satisfies $T_i$. Then 
\begin{gather*}
\mathop{\prob}_{\x \sim \cD}[h^\star(\x)=1 \wedge \hat{h}(\x)=0] = \mathop{\prob}_{\x \sim D}[\exists i \in [d] \text{ such that } \x \text{ satisfies } T_i \wedge \hat{h}(\x)=0]  \\
\leq \sum_{i=1}^d \mathop{\prob}_{\x \sim \cD}[ \x \text{ satisfies } T_i \wedge \hat{h}(\x)=0]  = \sum_{i=1}^dp_i \cdot \mathbbm{1}_{A_i}
\end{gather*}
Notice that since $p_i$ is the probability to sample $x$ we get that $\underset{S \sim D^m}{\prob}[A_i] = (1-p_i)^m$ 
\newpage
Now if we look at the expectation we get

\begin{eqnarray*}
\mathop{\bE}_{S \sim \cD^m}[L_{\cD,h^\star}(\hat{h})] &\leq& \mathop{\bE}_{S \sim \cD^m}[\sum_{i=1}^d p_i \cdot \mathbbm{1}_{A_i}]
\\
&=& \sum_{i=1}^d p_i \mathop{\bE}_{S \sim D^m}[\mathbbm{1}_{A_i}]
\\
&=& \sum_{i=1}^d p_i \mathop{\prob}_{S \sim D^m}[A_i]
\\
&=& \sum_{i=1}^d p_i (1-p_i)^m
\\
&=&\sum_{i | p_i<\frac{\epsilon}{2d}} p_i (1-p_i)^m + \sum_{i | p_i\geq\frac{\epsilon}{2d}} p_i  (1-p_i)^m
\\
&\leq & \sum_{i | p_i<\frac{\epsilon}{2d}} \frac{\epsilon}{2d} + \sum_{i | p_i\geq\frac{\epsilon}{2d}}  (1-p_i)^m
\\
&\leq & d \cdot \frac{\epsilon}{2d} + \sum_{i | p_i\geq\frac{\epsilon}{2d}} e^{-mp_i}
\\ &\leq&  \frac{\epsilon}{2} + d \cdot e^{-m\frac{\epsilon}{2d}} 
\end{eqnarray*}

Since $m \geq \frac{2d}{\epsilon} \log \frac{2d}{\epsilon}  $  we get $\bE[L_{\cD,h^\star}(\hat{h})] < \epsilon $ and using Markov's inequality we obtain

\begin{gather*}
\mathop{\prob}_{S \sim D^m}[L_{\cD,h^\star}(\hat{h})] \geq 4\epsilon] \leq  \frac{\bE [L_{\cD,h^\star}(\hat{h})]}{4\epsilon} < \frac{1}{4}
\end{gather*}

\end{proof}

\newpage
\begin{algorithm}[h]
\caption{ Create a $\DNF$ formula with checking and deleting false terms}
\textbf{Input: } $S_1 , S_2 \subseteq (\{-1,1\}^n  \times\{-1,1\})^m $\\
\textbf{Output: } a $\DNF$ formula $H$
\begin{algorithmic}\label{alg2}

\STATE start with an empty $\DNF$ formula $H$

\FORALL {$(\x,y) \in S_1$}
	\IF {$y=1$}
		\STATE define $T = \x_1\wedge \overline{x_1}\wedge x_2\wedge \overline{x_2}\wedge \ldots
		\wedge x_n 	\wedge \overline{x_n} $ \FOR{$ 1\leq j \leq n$}
		\STATE query $\x^{\oplus j}$ (to get $h^\star(\x^{\oplus j})$)
		\IF{$h^\star(\x^{\oplus j})=1$}
			\STATE remove $x_j$ and $\overline{x_j}$ from $T$
		\ENDIF
		\IF{$h^\star(\x^{\oplus j})=0$}
			\IF{$x_j=1$}
 				\STATE remove $\overline{x_j}$ from $T$
			\ENDIF
			\IF{$x_j=0$}
 				\STATE remove $x_j$ from $T$
			\ENDIF
		\ENDIF

\ENDFOR
	\STATE $H = H \vee T$
	\ENDIF

\FORALL{$T$ in $H$}
\FORALL {$(\x,y) \in S_2$}
\IF{$T(\x)=1$ but $y=0$}
 \STATE remove $T$ from $H$
\ENDIF
\ENDFOR

\ENDFOR
\ENDFOR
\RETURN{H}

\end{algorithmic}
\end{algorithm}

\begin{theorem}
\label{learnable weak evident dist} 
The hypothesis class $\cH_{\DNF}$ is 1-LQ learnable with respect to distributions that are weakly realized by a $\DNF$ with evident examples. \end{theorem}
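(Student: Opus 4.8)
The plan is to show that Algorithm~\ref{alg2} learns $\cH_{\DNF}$ with $1$-local membership queries under the weaker assumption, by reusing the extraction loop of Algorithm~\ref{alg1} and controlling the damage caused by the newly added deletion phase. The essential new difficulty relative to \thmref{learnable evident dist} is that a positive example need no longer satisfy any term evidently, so when the extraction loop is run on a non-evident positive $\x$ it still outputs \emph{some} term $T$ satisfied by $\x$, but this \emph{spurious} term may also cover negative examples. The deletion phase over the fresh sample $S_2$ exists precisely to discard such terms. I would take $S_1$ and $S_2$ to be \emph{independent} i.i.d. samples from $\cD$ (labelled by $h^\star$), but crucially of \emph{different} sizes: a small $S_1$ of size $m_1=\Theta(n^3\log n)$ used for extraction, and a larger $S_2$ of size $m_2=\Theta(m_1/\epsilon)$ used for cleaning. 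Efficiency is then immediate and as in \thmref{learnable evident dist}: phase~1 asks at most $n\cdot m_1$ many $1$-local queries, and the deletion phase only evaluates the constructed terms on $S_2$. I would then split $L_{\cD,h^\star}(\hat{h})$ into false negatives and false positives and bound each separately.

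First I would argue there are no false negatives, by showing every genuine term of $F$ is recovered and never deleted. By weak realizability, each term $T_i$ of $F$ is satisfied evidently by a drawn example with probability at least $\frac{1}{n^3}$, so the chance that $S_1$ contains no evident example for $T_i$ is at most $(1-\frac{1}{n^3})^{m_1}\le e^{-m_1/n^3}$; a union bound over the $d\le n^2$ terms shows that for $m_1=\Theta(n^3\log n)$, with probability at least $\frac{7}{8}$ the sample $S_1$ contains an evident example for \emph{every} $T_i$, and by \clref{propeq} the extraction loop then outputs exactly $T_i$. Moreover a genuine term satisfies $T_i(\x)=1\Rightarrow h_F(\x)=1\Rightarrow h^\star(\x)=1$, so its false-positive probability is zero and it is never removed by the deletion phase. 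On this event $H$ contains all of $T_1,\dots,T_d$, and since $h^\star(\x)=1$ forces some $T_i(\x)=1$, we get $\prob_{\x\sim\cD}[h^\star(\x)=1\wedge \hat{h}(\x)=0]=0$.

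Next I would bound the false positives from surviving spurious terms. For a term $T$ produced in phase~1 set $q_T=\prob_{\x\sim\cD}[T(\x)=1\wedge h^\star(\x)=0]$, so that $\prob_{\x}[\hat{h}(\x)=1\wedge h^\star(\x)=0]\le \sum_{T\in H}q_T$. Conditioning on $S_1$ (which fixes the at most $m_1$ candidate terms and their $q_T$), the independence of $S_2$ gives $\prob_{S_2}[T\text{ survives}\mid S_1]=(1-q_T)^{m_2}\le e^{-q_T m_2}$, hence $\bE_{S_2}\big[\sum_{T\in H}q_T\,\big|\,S_1\big]\le \sum_{T\in H}q_T e^{-q_T m_2}\le \frac{m_1}{e\,m_2}$, using $\max_{q}\,q e^{-q m_2}=\frac{1}{e\,m_2}$ together with $|H|\le m_1$. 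Choosing $m_2=\Theta(m_1/\epsilon)$ makes the (unconditional) expected false-positive mass at most $\frac{\epsilon}{8}$, so by Markov the false-positive mass exceeds $\epsilon$ with probability at most $\frac{1}{8}$. Combining with the coverage event by a union bound, $L_{\cD,h^\star}(\hat{h})<\epsilon$ holds with probability at least $\frac{3}{4}$, and all sample sizes and running times are polynomial in $n$ and $\frac{1}{\epsilon}$.

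The hard part, and the only place the argument genuinely departs from \thmref{learnable evident dist}, is the aggregate false-positive control of the spurious, \emph{data-dependent} terms. Two points must be handled with care. First, the deletion analysis must be carried out after conditioning on $S_1$ and must exploit the independence of $S_2$, since the candidate terms are read off from $S_1$. Second, and less obvious, the size of $S_1$ must be kept deliberately small: the bound $\frac{m_1}{e\,m_2}$ scales with the number of extracted terms, so if one instead took $m_1=m_2$ the expected false-positive mass would degenerate to the constant $\frac{1}{e}$ and the argument would collapse. Keeping $m_1=\Theta(n^3\log n)$ (just enough for coverage) while taking $m_2=\Theta(m_1/\epsilon)$ is what simultaneously limits the number of spurious terms and leaves enough cleaning power; the pointwise inequality $q e^{-q m_2}\le \frac{1}{e\,m_2}$ is exactly what makes the aggregate bound independent of the individual $q_T$. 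The remainder is bookkeeping essentially identical to the realizable case.
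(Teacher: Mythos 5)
Your proposal is correct and follows essentially the same route as the paper's proof: the same two-phase algorithm with an independent extraction sample $S_1$ of size $\Theta(n^3\log n)$ (union bound over the $d\le n^2$ terms to get an evident example, hence exact recovery, of every genuine term with probability $\ge \frac{7}{8}$, and genuine terms are never deleted since they cover no negatives) and a cleaning sample $S_2$ of size roughly $m_1/\epsilon$, with the false-positive mass of surviving spurious terms controlled in expectation and finished by Markov plus a union bound. The only deviations are minor refinements rather than a different approach: you bound the expected surviving mass via the pointwise inequality $q\,e^{-q m_2}\le \frac{1}{e\,m_2}$ where the paper reuses its small/large split on the $p_i$ (yielding $m_2=\frac{2m_1}{\epsilon}\log\frac{2m_1}{\epsilon}$), and you correctly charge each spurious term its negative mass $q_T=\prob_{\x\sim\cD}[T(\x)=1\wedge h^\star(\x)=0]$ with explicit conditioning on $S_1$, which tidies a point the paper states loosely (its $p_i$ is the total mass of $\hat{T}_i$, and its claim that any example satisfying a wrong term would expose it is not literally true, though harmless since such terms contribute no false positives).
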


\begin{proof}
We will prove that algorithm~\ref{alg2} learns  $\cH_{\DNF}$ with 1-local membership queries. In this case we will have two sample sets - $S_1$ of size $m_1$ which will be used as before - to build the terms of $H$, and $S_2$ of size $m_2$ - a separate set to check the terms that were built. 
Again, it is easy to see that this algorithm is efficient. For training sets $S_1$ of size $m_1$ and $S_2$ of size $m_2$ the algorithm asks for at most $n\cdot m_1$ $1$-local membership queries. The running time of the first loop is $O(nm_1)$ and in that loop we add at most $m_1$ terms to $H$ so the running time of the second loop is $O(m_1m_2)$. All in all the running time is polynomial in $(m_1,m_2,n)$. Also, the hypothesis that the algorithm returns is a $\DNF$ formula with at most $m_1$ terms and every term is of size at most n, therefore it can be evaluated at time polynomial in $m_1n$.

Now, let $\cD$ be a distribution on $\{-1,1\}^n$ and $h^\star :  \{-1,1\}^n \to  \{0,1\}$ be a hypothesis such that the pair $(\cD, h^\star)$ is realized by a small $\DNF$ with evident examples. Let $F=T_1 \vee T_2 \vee \ldots \vee T_d$ be that small $\DNF$ formula, (in particular  $h^\star = h_F$ and $d \leq n^2$). Denote by $H=\hat{T}_1 \vee \hat{T}_2 \vee \ldots \vee \hat{T}_k$ the DNF formula algorithm~\ref{alg2} returns. Following the same argument from the last proof, a term $T_i$ will be added to H in the first loop if $S_1$ contains an example that satisfies $T_i$ evidently. We will define $m_1$ so that with high probability for every term $T_i$ there will be $ (\x,1) \in S_1$ such that $\x$ satisfies $T_i$ evidently. \newline
Denote by $s_i$ the probability to sample $\x$ (from $\cD$) that satisfies $T_i$ evidently, and let $s = \min\{s_i\}_{i=1}^d$. Since for every term the probability to see an evident example is non-negligible, $s\ge n^{-3}$.
For every i, the probability of \textit{not} seeing an example in $S_1$ that satisfies $T_i$ evidently is 
\begin{gather*}
(1-s_i)^m \leq (1-s)^m \leq e^{-sm} \leq e^{-\frac{m}{n^3}}
\end{gather*}
If we set $m_1$ to be $n^3\log(8n^2)\ge n^3\log(8d)$ we get that the probability of not seeing an example that satisfies $T_i$ evidently (when sampling $S_1$ from $\cD^{m_1}$) is less than $\frac{1}{8d}$ and from the union bound we get that the probability that the sample will contain an evident example for every term is at least $\frac{7}{8}$. Therefore with probability of at least $\frac{7}{8}$ we will add every $T_i$ to $H$ in the first loop. In the second loop, when we remove terms from $H$, we only remove terms which contradicts one of the examples in $S_2$. Since all of the examples in the sample set are labeled by $F$, we will never remove a term that is a part of $F$ %TODO {this is where we use the realizabilty assumption}. 
Therefore with probability of at least $\frac{7}{8}$ $H$ will contain \textit{all} of $F$'s terms. Formally, 
\begin{gather*}
\mathop{\prob}_{S_1 \sim \cD^{m_ 1}} [\mathop{\prob}_{\x \sim \cD}[h^\star(\x)=1 \wedge \hat{h}(\x)=0] = 0] \geq \frac{7}{8}
\end{gather*}

Note that we are not done, as the algorithm might create a wrong term (when using a "non-evident" example). For this reason we add the second loop. We use the sample $S_2$ to test every term $\hat{T}_i$ that was added to $H$ in the first loop. If we see an example $\x$ such that $\hat{T}_i(\x)=1$ but $h^\star(\x)=0$ we remove $\hat{T}_i$ and continue to the next term. Now denote by $p_i$ the probability to sample $\x$ (from $\cD$) that will satisfy $\hat{T}_i$, and by $A_i$ the event that $\hat{T}_i$ is a wrong term (not from F) but the ''checking" step did not discover that. Then

\begin{eqnarray*}
\mathop{\prob}_{\x \sim \cD}[\hat{h}(\x)=1 \wedge h^\star(\x)=0] &=& \mathop{\prob}_{\x \sim \cD}[\exists i \in [k] \text{ such that } \x \text{ satisfies } \hat{T}_i \wedge h^\star(\x)=0] 
\\
&\leq & \sum_{i=1}^k \mathop{\prob}_{\x \sim \cD}[\x \text{  satisfies } \hat{T}_i \wedge h^\star(\x)=0]
\\
&=& \sum_{i=1}^kp_i \cdot \mathbbm{1}_{A_i}
\end{eqnarray*}
Note that since $A_i$ is the event that there wasn't any example in $S_2$ which satisfied $\hat{T}_i$ (otherwise the checking step would discover that $\hat{T}_i$ is wrong) this is the same situation as in the proof of theorem~\ref{learnable evident dist}, so
\begin{gather*}
\mathop{\prob}_{S_2 \sim \cD^{m_2}}[A_i] = (1-p_i)^{m_2}
\end{gather*}
By the same analysis of the former proof, we get that if the size of $S_2$ is $\ge  \frac{2k}{\epsilon} \log \frac{2k}{\epsilon}$ then
\begin{gather*}
\mathop{\prob}_{S_2 \sim \cD^{m_2}}[\mathop{\prob}_{\x \sim \cD}[h^\star(\x)=0 \wedge \hat{h}(\x)=1] \geq 4\epsilon] \leq \frac{1}{4}
\end{gather*}
Finally we notice that $k \leq m_1$, because for each example in $S_1$ the algorithm adds at most one term to $H$. So we can set $m_1$ as above and $m_2 =  \frac{2m_1}{\epsilon} \log \frac{2m_1}{\epsilon}  $ and if we run algorithm ~\ref{alg2} on $S_1$ and $S_2$ we get that with probability of at least $1 - (\frac{1}{4} + \frac{1}{8}) = \frac{3}{4} - \frac{1}{8}$ over sampling $S_1$ and $S_2$ 
\begin{eqnarray*}
L_{\cD,h^\star}(\hat{h}) &=& \mathop{\prob}_{\x \sim \cD}[h^\star(\x) \neq \hat{h}(\x)]
\\
&=& \mathop{\prob}_{\x \sim \cD}[h^\star(\x)=1 \wedge \hat{h}(\x)=0] + \mathop{\prob}_{\x \sim \cD}[h^\star(\x)=0 \wedge \hat{h}(\x)=1]
\\
&\leq& 0 + 4\epsilon = 4 \epsilon
\end{eqnarray*}

\end{proof}

\subsection{A Lower Bound}

In this section we provide evidence that the use of queries in our upper bounds is crucial. We will show that the problem of learning poly-sized decision trees can be reduced to the problem of learning DNFs w.r.t. distributions that are realized by a small DNF with evident examples. As learning decision trees is widely believed to be intractable (in fact, even learning the much smaller class of $\log(n)$-juntas is conjectured to be hard), this reduction serves as an indication that the problems we considered are hard without membership queries.

\begin{definition}
A decision tree over $\{-1,1\}^n$ is a binary tree with labels chosen from \\$x_1 , \ldots , x_n $ on the internal
nodes, and labels from $\{0, 1\}$ on the leaves. Each internal node's left branch is viewed as
the $-1$ branch; the right branch is the $1$ branch. Each decision tree over $n$ variables induces a function $h : \{-1,1\}^n \to \{0,1\}$ in the following way: For a decision tree $T$ , a vector $\textbf{a} \in \{-1, 1\}^n$ defines a path in the tree from the root to a specific leaf by choosing $a_i$'s branch at each node $x_i$ and the value that the function $h_T$ returns on $\textbf{a}$ is defined to be the label of the leaf at the end of this path.
\end{definition}

\begin{definition} \label{DT hypothesis}
Denote by $\cH_{\DT}$ the hypothesis class of all functions that can be realized by a decision tree with a small number of leaves. That is $$\cH_{\DT} = \{h_T : \text{ T is a } \DT \text{  with at most } n^2 \text{ leaves}\}$$
\end{definition}

\begin{theorem}\label{hardness}
PAC learning  the hypothesis class $\cH_{\DNF}$ w.r.t distributions that are realized by a small $\DNF$ with evident examples is as hard as PAC learning $\cH_{\DT}$.
\end{theorem}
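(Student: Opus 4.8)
The plan is to reduce learning $\cH_{\DT}$ to learning $\cH_{\DNF}$ over the family of distributions realized by a small $\DNF$ with evident examples. The starting point is the standard translation of a decision tree into a $\DNF$: for a decision tree $T$, take one term per leaf labelled $1$, namely the conjunction of the literals read along the root-to-leaf path. This $\DNF$ computes $h_T$, has at most as many terms as $T$ has leaves (hence at most $n^2$), and, crucially, has the property noted right after Definition~\ref{distribution realized by DNF with evident}: since every input follows exactly one path, every example satisfies at most one term, so the first two bullets of Definition~\ref{satisfies evidently} hold for every positive example. The difficulty is the third bullet: flipping a variable $x_j$ that appears on the path (i.e.\ in the term $T_i$) may route the example to a \emph{different} $1$-leaf, so that $\x^{\oplus j}$ satisfies some $T_k\neq T_i$, and evidentness fails.

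To repair this I would pass to a padded instance space $\{-1,1\}^{2n}$ and duplicate every coordinate. Define the embedding $\Phi:\{-1,1\}^n\to\{-1,1\}^{2n}$ by $\Phi(\x)=(x_1,x_1,x_2,x_2,\ldots,x_n,x_n)$, let $\cD$ be the push-forward of the original distribution $\cD'$ under $\Phi$ (so $\cD$ is supported on the ``diagonal'' where the two copies of each variable agree), and let $F$ be the $\DNF$ obtained from the tree's $\DNF$ by replacing each literal $x_j$ (resp.\ $\overline{x_j}$) with the conjunction of its two copies $x_j^{(1)}\wedge x_j^{(2)}$ (resp.\ $\overline{x_j^{(1)}}\wedge\overline{x_j^{(2)}}$). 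Then $F$ still has at most $n^2\le(2n)^2$ terms, and $h_F(\Phi(\x))=h_T(\x)$ for all $\x$, so $\Phi$ maps the $\DT$ problem to the $\DNF$ problem while preserving labels.

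The heart of the argument is to check that every positive diagonal example $\Phi(\x)$ satisfies its term $T_i$ evidently, and this is where I expect the real work to lie. The key observation is that flipping a single coordinate of $\Phi(\x)$ breaks the agreement of exactly one pair of copies, say the pair for $x_j$. If $x_j$ lies on the path of $T_i$, the mismatched pair falsifies $T_i$; moreover every other term $T_k$ is also falsified---either it mentions $x_j$, and the same mismatch kills it, or it does not, in which case the flipped point agrees with $\Phi(\x)$ on all of $T_k$'s variables and hence cannot satisfy $T_k$ (as $\x$ satisfies only $T_i$). Thus $h_F$ becomes $0$, exactly as the third bullet requires. If instead $x_j$ is off the path of $T_i$, the mismatched pair does not appear in $T_i$, so the flipped point still satisfies $T_i$, and the same case analysis shows it satisfies no other term; hence it lies in $T_i$ and only $T_i$. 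This verifies the third bullet for every coordinate, so $(\cD,h_F)$ is realized by a small $\DNF$ with evident examples.

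Finally I would assemble the reduction. Given a hypothetical PAC learner $\cA$ for $\cH_{\DNF}$ over this family, to learn $\cH_{\DT}$ we transform each training example $(\x,h_T(\x))$ into $(\Phi(\x),h_F(\Phi(\x)))=(\Phi(\x),h_T(\x))$, feed the i.i.d.\ transformed sample (which is distributed as $\cD$ labelled by $h_F$) to $\cA$ to obtain $\hat h:\{-1,1\}^{2n}\to\{0,1\}$, and output $g(\x)=\hat h(\Phi(\x))$. Since $\Phi$ pushes $\cD'$ forward to $\cD$ and preserves labels, $L_{\cD',h_T}(g)=L_{\cD,h_F}(\hat h)$, so the accuracy guarantee of $\cA$ transfers verbatim; the maps $\Phi$ and $g$ are plainly poly-time, so the sample size and running time blow up only polynomially. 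Hence a PAC learner for $\cH_{\DNF}$ on this family yields a PAC learner for $\cH_{\DT}$, which is the claimed hardness reduction.
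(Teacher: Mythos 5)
Your proposal is correct and follows essentially the same route as the paper's proof: the coordinate-doubling embedding $\x\mapsto(x_1,x_1,\ldots,x_n,x_n)$, the leaf-by-leaf translation of the tree into a $\DNF$ with each literal replaced by the conjunction of its two copies, and the push-forward reduction that composes the learned hypothesis with the embedding. If anything, your case analysis verifying the third (evidentness) bullet is more explicit than the paper's, which merely remarks that the doubling forces at least two coordinate changes to turn a term from False to True.
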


The proof will follow from the following claim:
\begin{claim} \label{reduction from DT to DNF}
There exists a mapping (a reduction) $\varphi : \{-1,1\}^n \rightarrow \{-1,1\}^{2n} $, that can be evaluated in $poly(n)$ time so that for every decision tree $T$ over $\{-1,1\}^n$ there exists a $\DNF$ formula $F$ over $\{-1,1\}^{2n}$ such that the following holds:
\begin{enumerate}
\item
The number of terms in $F$ is upper bounded by the number of leaves in $T$
\item
$h_T = h_F \circ \varphi$
\item
$\forall \x$ such that $h_T(\x) = 1$ ,  $\varphi (\x)$ satisfies some term in $F$ evidently. 
\end{enumerate}
\end{claim}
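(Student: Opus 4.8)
The plan is to use the standard conversion of a decision tree into a DNF, but carried out over a \emph{doubled} set of coordinates so that the resulting terms become robust to single-coordinate flips. Concretely, I would take $\varphi : \{-1,1\}^n \to \{-1,1\}^{2n}$ to be the duplication map $\varphi(\x) = (x_1, x_1, x_2, x_2, \ldots, x_n, x_n)$, which is clearly computable in $\poly(n)$ time. Labelling the $2n$ coordinates so that coordinates $2i-1$ and $2i$ are the two copies of $x_i$, I would build $F$ as follows: for each leaf of $T$ labelled $1$, take the root-to-leaf path, and for every test ``$x_i=1$'' (resp.\ ``$x_i=-1$'') along that path insert the pair of literals $z_{2i-1}\wedge z_{2i}$ (resp.\ $\overline{z_{2i-1}}\wedge \overline{z_{2i}}$) into the term. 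The DNF $F$ is the disjunction of these terms.

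Conditions (1) and (2) are then immediate. There is exactly one term per $1$-leaf, so the number of terms equals the number of $1$-leaves, which is at most the number of leaves, giving (1). For (2), on a point $\varphi(\x)$ the two copies of each variable agree and equal $x_i$, so each doubled literal pair evaluates exactly as the corresponding original literal does on $\x$; hence each doubled term agrees with the corresponding path-term of the usual decision-tree DNF, and $h_F(\varphi(\x))$ equals the value $T$ outputs on $\x$. This also records the structural fact I will exploit: since the leaves of $T$ partition $\{-1,1\}^n$, every $\x$ follows exactly one root-to-leaf path, so $\varphi(\x)$ satisfies at most one term of $F$, and it satisfies one precisely when the leaf that $\x$ reaches is labelled $1$.

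The substance of the claim is condition (3), the ``evident'' requirement, and this is where the doubling is essential (and where I expect the only real work to lie). Fix $\x$ with $h_T(\x)=1$ and let $T_i$ be the unique term that $\varphi(\x)$ satisfies; the first two bullets of Definition~\ref{satisfies evidently} hold by the previous paragraph. For the third bullet I would examine an arbitrary single flip $\varphi(\x)^{\oplus j}$. Coordinate $j$ is one of the two copies of some original variable $x_p$, so after the flip the two copies of $x_p$ disagree, while every other variable still has its two copies agreeing with $\x$. Now take any other term $T_k$ with $k\neq i$. If $T_k$ mentions $x_p$ at all, it contains $z_{2p-1}\wedge z_{2p}$ or $\overline{z_{2p-1}}\wedge \overline{z_{2p}}$, which forces the two copies to be equal; since they now disagree, $T_k$ cannot be satisfied. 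If $T_k$ does not mention $x_p$, then $\varphi(\x)^{\oplus j}$ agrees with $\varphi(\x)$ on every coordinate that $T_k$ reads, so $T_k$ would be satisfied iff $\varphi(\x)$ already satisfied it, which is impossible since $\varphi(\x)$ satisfies only $T_i$. Either way no term other than $T_i$ can become true, so whenever $\varphi(\x)^{\oplus j}$ satisfies $F$ it does so only through $T_i$, which is exactly the third bullet of Definition~\ref{satisfies evidently}.

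The main obstacle is precisely this third condition: the naive decision-tree DNF fails it, because flipping a variable tested high in the tree can move $\x$ onto a different accepting path and thereby satisfy a neighbouring term. The role of the doubling is to separate the terms so that touching any single coordinate destroys the agreement of the two copies of one variable, which simultaneously kills every term depending on that variable and leaves the remaining terms pinned to $\x$'s own unique path. Once this separation is in place the verification above is routine, and Theorem~\ref{hardness} then follows by composing any learner for $\cH_{\DNF}$ on the evident-example family with $\varphi$.
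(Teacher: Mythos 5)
Your proposal is correct and matches the paper's proof essentially exactly: the same duplication map $\varphi(\x)=(x_1,x_1,\ldots,x_n,x_n)$, the same doubled path-terms (one per $1$-leaf), and the same observations that path uniqueness gives exactly one satisfied term while the doubling forces at least two coordinate flips to turn any other term true. Your case analysis for the third bullet of Definition~\ref{satisfies evidently} is in fact spelled out more carefully than the paper's one-line remark, but it is the same underlying argument.
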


\begin{proof}
We will denote $\{-1,1\}^n$ by $\cX_n$ and $\{-1,1\}^{2n}$ by $\cX_{2n}$.

Define $\varphi$ as follows: 
$$ \forall \x=(x_1,x_2, \ldots , x_n) \in \cX_n  \qquad \varphi(x_1,x_2, \ldots , x_n) = (x_1,x_1,x_2, x_2 , \ldots , x_n, x_n) $$
Now, for every tree $T$, we will build the desired $\DNF$ formula $F$ as follows: 
First we build $F'$ - a $\DNF$ formula over  $\{-1,1\}^n$ . Every leaf labeled '$1$' in $T$ will define the following term- take the path from the root to that leaf and form the logical AND of the literals describing the path. $F'$ will be a disjunction of these terms. Now, for every term $T$ in $F'$ we will define a term $\phi(T)$ over $\cX_{2n} $ in the following way: Let $P_T = \{i \in [n] : x_i \textit{ appear in T} \}$ and $N_T = \{i \in [n] : \overline{x_i} \textit{ appear in T} \}$. So 
$$ T = \bigwedge_{j \in P_T} x_j \bigwedge_{j \in N_T} \overline{x_j}$$ 
Define 
\begin{gather*}
\phi(T) = \bigwedge_{j \in P_T} x_{2j-1} \bigwedge_{j \in P_T} {x_{2j}}
\bigwedge_{j \in N_T}\overline{x_{2j-1}} \bigwedge_{j \in N_T}  \overline{x_{2j}} \\
 \end{gather*}
Finally, define $F$ to be the $\DNF$ formula over $\cX_{2n}$ by
$$ F =  \bigvee_{T \in F'} \phi(T)$$

We will now prove that $\varphi$ and $F$ satisfy the required conditions.
First, $\varphi$ can be evaluated in linear time in $n$.
Second, it is easy to see that $h_T = h_F \circ \varphi$, and as every term in $F$ matches one of $T$'s leaves, the number of terms in $F$ cannot exceed the number of leaves in $T$. It is left to show that the third requirement holds. Let there be an $\x$ such that $h_T(\x) = 1$, then $x$ is matched to one and only one path from $T$'s root to a leaf labeled '1'. From the construction of $F$, $\x$ satisfies one and only one term in $F'$ because every term is matched to exactly one path from $T$'s root to a leaf labeled 1.
Regarding the last requirement - that no coordinate change will make one term from $F$ False and another one True - we made sure this will not happen by ``doubling" each variable. By this construction, in order to change a term from False to True at least two coordinate must change their value.

\end{proof}

\begin{proof} [of theorem~\ref{hardness}]
Suppose we have an efficient algorithm $\cA$ that PAC learns $\cH_{\DNF}$ with respect to distributions that are realized by $\DNF$ with evident examples. Using the reduction from claim~\ref{reduction from DT to DNF} we will build an efficient algorithm $\cB$ that will PAC learn $\cH_{\DT}$.\newline
For every training set with examples from $\cX_n$:
\[
S = \{(\x_1,h^\star(\x_1)) , (\x_2,h^\star(\x_2)) , \ldots , (\x_m,h^\star(\x_m)) \} \in (\cX_n \times \{0,1\})^m
\]
we define a matching training set with examples from $\cX_{2n}$, using $\varphi$ from the above claim:
\[
\tilde{S} := \{(\varphi(\x_1),h^\star(\x_1)) , (\varphi(\x_2)),h^\star(\x_2)) , \ldots , (\varphi(\x_m),h^\star(\x_m)) \} \in (\cX_{2n} \times \{0,1\})^m
\]
The algorithm $\cB$ will work as follows: \\ Given a training set $S$, $\cB$ will construct $\tilde{S} = \varphi(S)$ and then run $\cA$ with input $\tilde{S}$. Let $\hat{h}$ be the output of $\cA$ when running on $\tilde{S}$, $\cB$ will return $\hat{h} \circ \varphi$. Since $\varphi$ can be evaluated in $poly(n)$ time and $\cA$ is efficient, we get that $\cB$ is also efficient.

We will prove that algorithm $\cB$ is a learning algorithm for the class $\cH_{\DT}$. Since $\cA$ is a learning algorithm for the class $\cH_{\DNF}$ with respect to distributions that are realized by a small $\DNF$ with evident examples, there exists a function $m_\cA \left(n,\epsilon\right)\le \poly\left(n,\frac{1}{\epsilon}\right)$, such that 
for every $(\cD, h^\star)$ that is realized by a small $\DNF$ with evident examples and every $\epsilon>0$, if $\cA$ is given a training sequence
\[
S = \{(\x_1,h^\star(\x_1)) , (\x_2,h^\star(\x_2)) , \ldots , (\x_m,h^\star(\x_m)) \} 
\]
where the $\x_i$'s are sampled i.i.d. from $\cD$ and $m \ge m_\cA (n,\epsilon)$, then with probability of at least $\frac{3}{4}$ (over the choice of $S$), the output $\hat{h}$ of $\cA$ satisfies 
$L_{\cD, h^\star}(\hat{h}) \le \epsilon$. 

Let $\cD$ be a distribution on $\cX_n$ and let $h_T$ be a hypothesis that can be realized by a small $\DT$. Define a distribution $\tilde{\cD}$ on $\cX_{2n}$ by,
\[
\tilde{(\cD)}(\z) =
\begin{cases} 
\cD(\x) & \textrm{if}~ \exists \x \in \cX_{n} \textrm{ such that}~ \z = \varphi(\x) \\
0 & \textrm{otherwise}~ 
\end{cases}
\] 
Since $\varphi$ is one-to-one, $\tilde{\cD}$ is well defined and is a valid distribution on $\cX_{2n}$.

Now, as $h_T$ is realized by a small $\DT$, then from the conditions that $\varphi$ satisfies we get that there exists a $\DNF$ formula $F$ such that $h_T= h_F \circ \varphi$  and the pair $(\tilde{\cD}, h_F)$ is realized by a small $\DNF$ with evident examples.
Now  for every $\epsilon>0$ we take a sample
$
S = \{(\x_1,h_T(\x_1)) , (\x_2,h_T(\x_2)) , \ldots , (\x_m,h_T(\x_m)) \} $ with $m =  m_\cA (2n,\epsilon)$
and obtain that with probability of at least $\frac{3}{4}$ it holds that

\begin{eqnarray*}
L_{\cD,h_T}(\cB(S)) &=& L_{\cD,h_T}(\hat{h} \circ \varphi)
\\
&=& \mathop{\prob}_{\x \sim \cD}[h_T(\x) \neq \hat{h} \circ \varphi(\x)]
\\
&=& \mathop{\prob}_{\x \sim \cD}[h_F \circ \varphi (\x) \neq \hat{h} \circ \varphi(\x)] 
\\
&=& \mathop{\prob}_{\z \sim \tilde{\cD}}[h_F(\z) \neq \hat{h}(\z)]
\\
&=& L_{\tilde{\cD},h_F}(\hat{h})
\\ 
&=& L_{\tilde{\cD},h_F}(\cA(\tilde{S})) < \epsilon
\end{eqnarray*}
So $\cB$ is indeed a learning algorithm for the class $\cH_{\DT}$

\end{proof}
\newpage
\section{Experiments} \label{empirical}
Membership queries are a mean by which we can use human knowledge for improving performance in learning tasks.
Human beings have a very rich knowledge and understanding of many problems that the ML community works on. They can provide much more information than merely the category of the object or an answer to a ``yes" or ``no" question. This knowledge is often basic, and can be acquired without the use of an expert (e.g., using crowd-sourcing).
In this section we will present empirical results of an algorithm which takes advantage of this extensive knowledge in order to perform smart feature selection. 

In standard supervised classification tasks the user is only asked to give the label of each example. What we did in this task, is to ask for additional information.
Specifically, we faced a situation where we had a large number of features, and that these features had an interpretation that is easily understood. For every example in the sample set, we asked the user for its label and \textit{in addition}, we asked which features indicate that this instance is labeled as such. After we finished iterating over the entire sample, we used the information on the relevant features to narrow down the feature space. Concretely, we trained linear classifiers only on the features that were chosen to be indicative by the users.

Arguably, this algorithm gathers additional information in a manner that is similar to using 1-local membership queries. 1-local query tests whether changing the value of a single feature changes the label. This can be seen as asking whether this feature is relevant to the prediction or not. In the algorithm presented here, we ask for the relevant features in a broader way. Namely, we explicitly ask which words are relevant to the corresponding label.

\subsection{Is the additional data useful?}
When humans make decisions, it is often by very complex thought processes and we do not know whether we can access specific considerations that were used in the decision making process.  The first goal of this experiment is to show that at least for some tasks, important parts of this thought process are easily accessible. I.e., that the annotators' knowledge can be retrieved by asking simple questions.  
The second goal is to show that using this extra knowledge can help significantly decrease the number of tagged examples that are required.

We will formulate the above goals using the notion of \textit{error decomposition}. Let $\hat h$ be the classifier returned by the algorithm. We decompose $L_{\cD}(\hat h)$ as a sum of the {\em approximation error} (the error of the best linear classifier) and the {\em estimation error} (the difference between $L_{\cD}(\hat h)$ and the approximation error):
$$L_{\cD}(h_S) = \epsilon_{app} + \epsilon_{est} ~\text{ where } ~\epsilon_{app} = \min_{h \in \cH} L_{\cD}(h) ~\text{ and }~ \epsilon_{est} = L_{\cD}(h_S) - \min_{h \in \cH}L_{\cD}(h)$$
The approximation error $\epsilon_{app}$ measures how good is the class of linear classifiers that we restrict ourselves to. In other words, since the class is linear, how informative are the features we use. The estimation error measures to which extent the algorithm overfits the data.

We can now formulate the above goals into claims on the approximation and estimation error.
By applying the \textit{user induced} feature selection mentioned above we can only increase the approximation error, as we reduce the hypothesis class to a smaller one. We will want to show that the feature space chosen by the users is still expressive enough, so that the increase in the approximation error will be minor. In addition, we will show that the feature selection is effective in the sense that the estimation error decreases significantly.

\subsection{Experimental setup}
\subsubsection{Sentiment Analysis}
Sentiment analysis (SA) is the Natural
Language Processing task of identifying the attitude of a given text (usually whether it is positive, neutral or negative). This task has been studied in the NLP community for many years at different scale levels. It started off from being a document level classification task \citep{pang2004sentimental}, and then the focus shifted to handling the sentence level \citep{hu2004mining,kim2004determining}. The newest focus is sentiment analysis of Microblog data like Twitter.
Working with these informal text genres, on which users post their opnions, emotions, and recations about practically everything, presents
new challenges for natural language processing beyond those encountered when working with more
traditional text genres such as news-wire or product reviews.
Indeed, classical approaches to Sentiment Analysis \citep{pang2008opinion} are not directly applicable to tweets. While most of them focus on relatively large texts, e.g. movie or product reviews, tweets are very short and fine-grained.
Nevertheless, the great prominence of Social Media during the last few years encouraged a focus on
the sentiment detection over a microblogging domain. There has been a lot of recent work on sentiment analysis of twitter data. Some examples are \citep{pak2010twitter,kouloumpis2011twitter,davidov2010enhanced,barbosa2010robust}.

We chose this task to demonstrate our method since each example (tweet) is constructed from a limited number of features (words), making each of these features very important for classification. Therefore, it seems that information supplied by users, can be useful in focusing our attention on the important features.
Secondly, if in fact the two claims above hold, it will enable us to use a smaller data set, which is very important for this kind of tasks, since SA (and many more NLP tasks) require a large labeled data set which is often costly.

\subsubsection{Dataset}
\begin{table}[h]
\centering
\begin{tabular}{|c|c|c|c|c|c|}
\hline
  &  Negative & Neutral & Positive & All \\
\hline
Train& 1234 & $4193$ & 3012  & 8439 \\
\hline
Test& $640$ & $1962$ & $2099$ & 4701\\

\hline
\end{tabular}
\caption{The SemEval dataset}
\label{tab:semEval_data}
\end{table}
We worked with the data set from SemEval  \citep{nakov2013semeval}, a shared task for Sentiment Analysis of Tweets . This dataset is constructed of 13,140 (8,439 train+development and 4,701 test, see Table \ref{tab:semEval_data}) tweets which
were collected over a one-year period spanning from
January 2012 to January 2013. The tweets were labeled using the crowd sourcing tool Amazon Mechanical Turk and the labels were filtered to get rid of spammers. 

For each sentence (tweet), the users were asked to indicate the overall sentiment of the sentence - positive, negative or neutral \footnote{The original labeling had 4 classes-[objective,
positive, negative, or neutral] but since the turkers tended to mix up between the objective and neutral, the two classes were combined in the final task.} and also to mark all the subjective (positive or negative) words/phrases in the sentence\footnote{This labelling procedure was originally intended to be used for two separate tasks. The first is, when given a tweet containing a marked instance of a word or a phrase, to identify the sentiment of that instance (i.e., whether the word is negative or  positive). The second is identifying the sentiment of the whole tweet (without using the marked words).}. The learning task that we worked on is classifying the sentiment of the entire sentence. Although we only want to predict the sentiment of the tweet, we use these two labellings to get one ``richer" labelled data-set. I.e., each instance in our training set holds additional information to its sentiment - which words/phrases in the sentence indicate a positive or negative sentiment.

\subsubsection{Pre-processing}
Beside simple text, tweets may contain URL
addresses, references to other Twitter users (appear as @$<$username$>$) or content tags (also called hashtags) assigned by the tweeter ($\#<$tag$>$).
During preprocessing, we performed the following standard manipulations:
\begin{itemize}
\item
Words were switched to lower case and punctuation marks were removed (apart from a fixed set of smileys)
\item
Every hyperlink was replaced by the meta-word URL
\item
Every word starting with $ @$, i.e. a username in twitter syntax, was replaced by the meta-word USR. 
\item 
The hashtag sign $'\#'$ was removed from every tag to get a simple word. For example \emph{$\#$perfect} was changed to \emph{perfect}. 
\end{itemize}

\subsubsection{Language Model}
We used the simple bag-of-words language models of n-grams (in our case unigrams, bigrams and trigrams). I.e., each tweet is represented as a sparse vector in $\{0,1\}^d$, where $d$ is the size of the dictionary and the $i$'th coordinate equals 1 if and only if the $i$'th word in the dictionary appears in the tweet.
We performed a standard cut-off of rare n-grams \footnote{without performing this cutoff, the results for the non-query variant are much worse}. 

\subsubsection{Scoring}
The results were evaluated on averaged $F1$ scores. This scoring function is used in the SemEval shared task, and overall a very common scoring function for NLP tasks.
The $F1$ score is the harmonic mean of Precision and Recall. Every label has it's $F1$ score. For the positive label, the Precision is the number of tweets that were correctly labeled as positive divided by the total number of tweets that were labeled as positive: $$P_{POS} = \frac{TP}{TP + FP}$$ 
The Recall of the positive label is the number of tweets that were correctly labeled as positive divided by the total number of positive tweets in the data: $$ R_{POS} = \frac{TP}{TP+FN}$$
The positive label $F1$-score is computed as follows:
$$F_{POS} = 2\frac{P_{POS} \cdot  R_{POS}}{P_{POS} + R_{POS}}$$
The negative label $F1$-score $F_{NEG}$ is computed similarly.
The final score that the results are evaluated on is the average of the above two: $$F1 = \frac{1}{2} (F_{POS} + F_{NEG})$$

\subsubsection{The algorithm}

We compare two variants for the feature space: using the entire feature space (after cutting off the rare n-grams), and using the "query acquired" feature space which contains only features that were selected by the users as positive or negative for some example. Information about the data and the number of features is given in table \ref{tab:feature_info}.
\begin{table}[h]
\centering
\begin{tabular}{|c|c|c|c|c|c|}
\hline
 & Unigrams & Bigrams & Trigrams \\
\hline
Overall number of features &18257& 89788& 128699\\
\hline
Features after cutoff& 3182 & 3099& 1718\\
\hline
Features selected by the users &1391& 1368&  846\\
\hline
\end{tabular}
\caption{Information about the features}
\label{tab:feature_info}
\end{table}

We used a simple Naive Bayes classifier, with a small smoothing parameter. We also checked other classification algorithms- random forests, logistic regression, and multiclass SVM, (with $\| \cdot\|_{1}$-regularization and $\| \cdot\|_{2}$-regularization), but the results of the Naive Bayes predictor were the highest for both feature spaces.

\begin{figure}[h]
\centering
\subfigure[F1-scores for positive samples]{
\includegraphics[width=.35\textwidth]{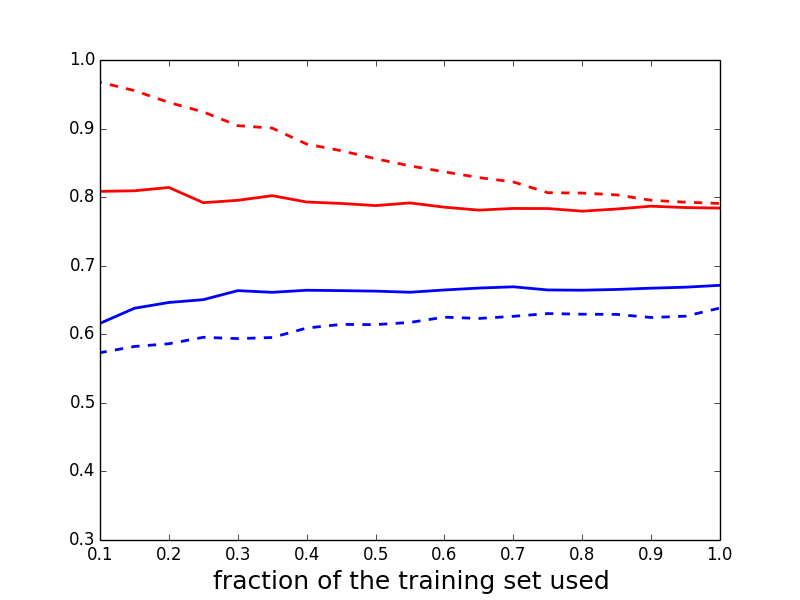}
}
\subfigure[F1-scores for negative samples]{
\includegraphics[width=.35\textwidth]{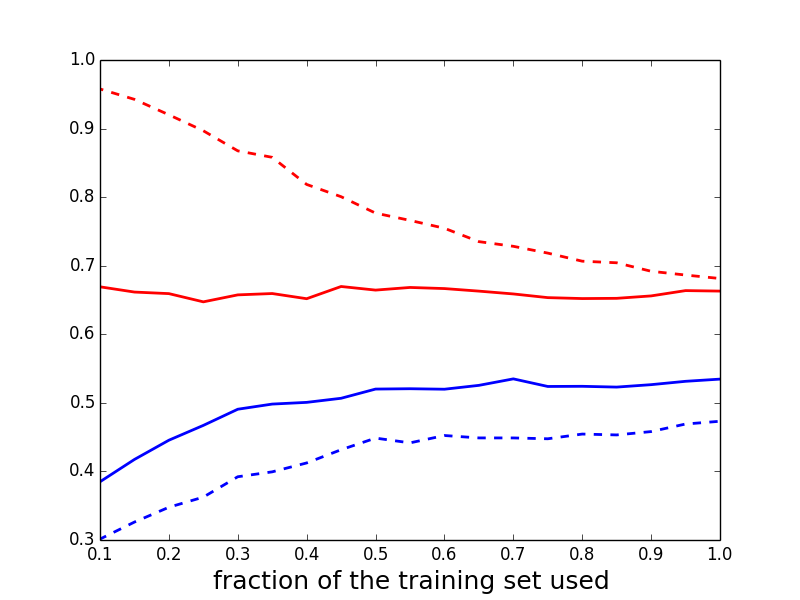}
}
\subfigure[Averaged F1-scores]{
\includegraphics[width=.6\textwidth]{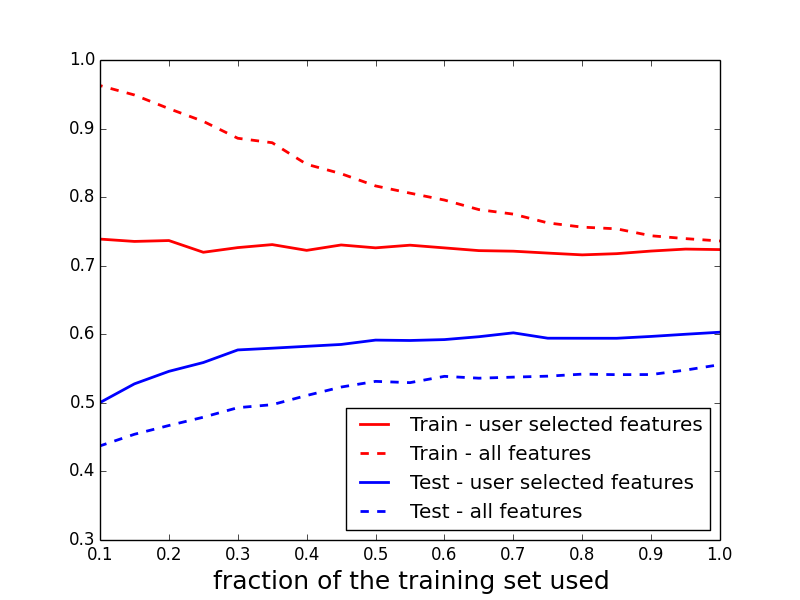}
}
\caption{Train (red) and test (blue) $F1$-scores for a Naive Bayes classifier using the entire feature space (dashed lines) compared to using the queries-acquired feature space (continues lines): (a) for positive samples, (b) for negative samples and (c) average of positive and negative scores.}
\label{fig:train_size_error}
\end{figure}

\subsection{Results}

The results that we will present are the results of the unigram model. The test scores of the other language models (unigram+bigrams and unigram+bigram+trigram) are almost identical for both feature spaces, and the training scores gets higher with the model complexity, as expected. Since our training set only contains approximately 8000 instances, we chose to present the results of the simplest model, so that the number of features would be comparable to the number of instances.

The results of both variants are presented in figure \ref{fig:train_size_error}. As can be seen by the test scores, our algorithm outperforms the other variant which does not uses the additional information. The difference in test performance is approximately constant across different training sizes.
Getting back to our claims - regarding the approximation error, by looking at the final training scores (using the larger training set possible), it can seen that both variants are almost identical in all of the measurements. This fact indicates that we did not increase the approximation error.
Regarding the improvement of estimation error, this can be seen clearly by looking at the gap between the test scores and the train scores. The gap in the query acquired model is smaller than the gap in the other model.

\subsubsection{Precision and Recall}
Additional interesting properties can be seen in the precision and recall graphs  (figure \ref{fig:precision_and_recall}). 
For example, by looking at the results for positive samples (a \& b) we can see that the improvement in the results from using the query model is almost only due to the improvement in the precision scores. If we only use $10\%$ of the data, the query model reaches 0.77 test precision, while the non-query model only reaches 0.71 test precision score even when using the whole data set. Another interesting property that can be seen is that when a small training set is used, the difference in the test scores between the query and non-query methods is about twice as large as the difference when the largest possible training set is used.

\begin{figure}[h]
\centering
\subfigure[Precision scores -- positive samples]{
\includegraphics[width=.35\textwidth]{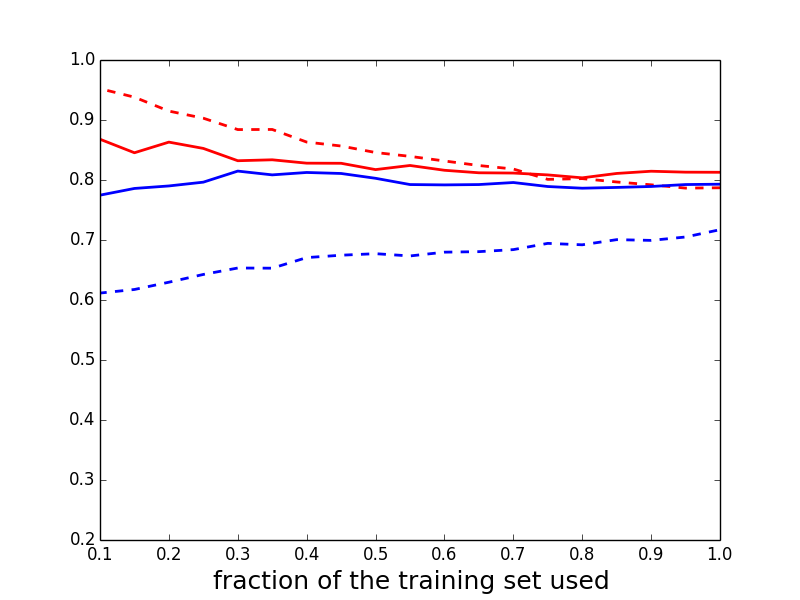}
}
\subfigure[Recall scores -- positive samples]{
\includegraphics[width=.35\textwidth]{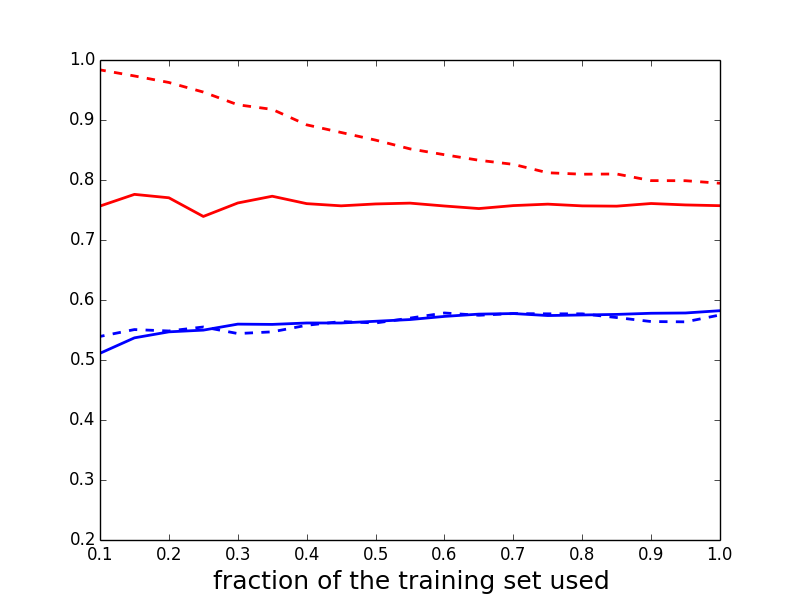}
}

\subfigure[Precision scores -- negative samples]{
\includegraphics[width=.35\textwidth]{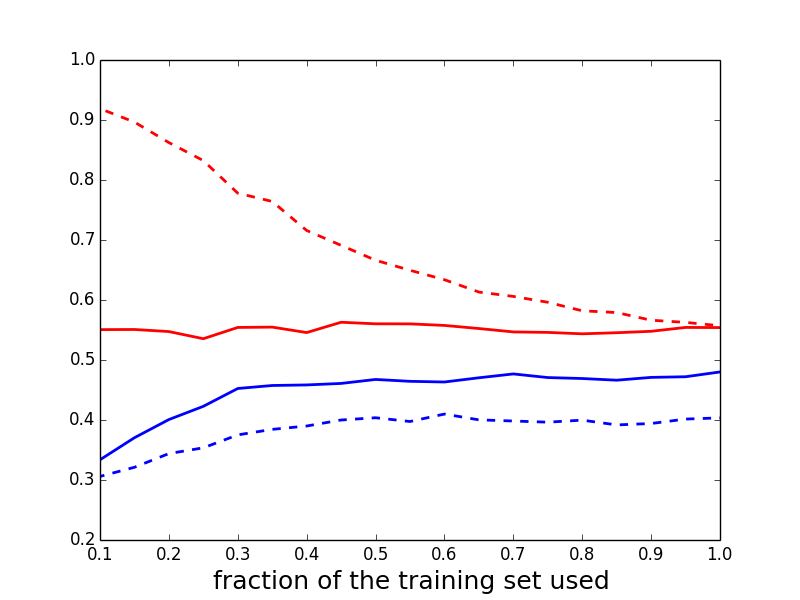}
}
\subfigure[Recall scores -- negative samples]{
\includegraphics[width=.35\textwidth]{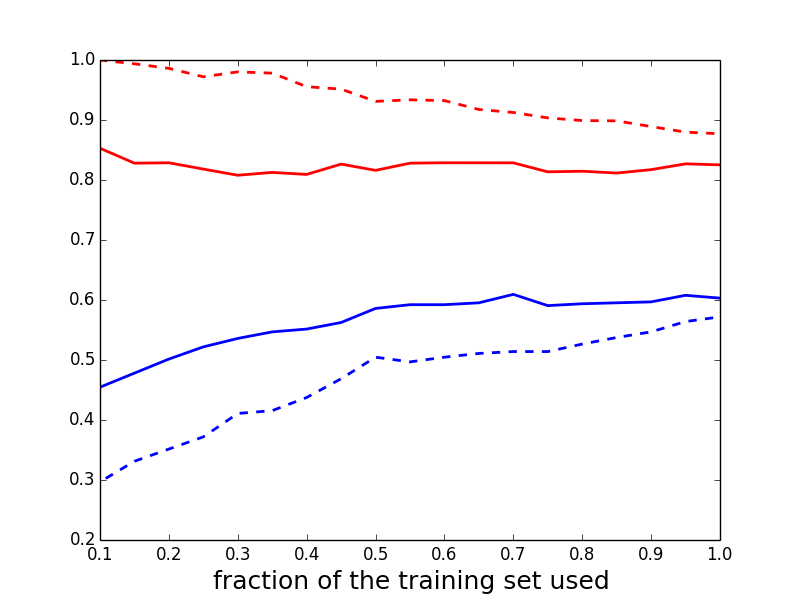}
}

\caption{Train (red) and test (blue) precision and recall scores for a Naive Bayes classifier using the entire feature space (dashed lines) compared to using the queries-acquired feature space (continues lines). Top -- positive samples, bottom -- negative samples, left -- precision and right -- recall.}
\label{fig:precision_and_recall}
\end{figure}

\subsubsection{Over-fitting}
When using the naive bayes algorithm, we estimate $\prob(f | c)$ for every feature $f$ and every label $c$. This term measures how much the appearance of $f$ contributes to the fact that $c$ is the correct label \footnote{by the naive assumption that all of the features are independent given the label, this information is actually the only information we use in order to build the classifier} . Using those terms, we can sort the features by an order which conveys their informativeness.
Since our features are words (or bigrams or trgrams), we can get some interesting insights by looking at the most informative features that each variant uses. If we only look at the top of the list (the top 20), the chosen features by both variants are almost identical. But, if we look a bit further we see how the algorithm which uses the entire feature space, chooses some significant features which clearly over-fit the training data. Some example are : "nick", "lloyd", and "justin" in the unigram model, "saturday kitchen", "ghost rider", "ray lewis" in the bigram model and "rugby world cup" in the trigram model.
 
This over-fitting will obviously decrease as we increase the training size (and practically by checking the most informative features at different training sizes, the smaller the sample is, the more easy it is to find over-fitting features like the above). But as already stated,
generally in Natural Language Processing it is much harder to acquire a large labeled data set. Therefore a method that avoids or significantly decrease this kind of over-fitting will be of high value.

\subsection{Comparing to other Feature Selection Methods}
A question that can be raised is whether the improvement in the results is just an effect of the feature selection itself, or that the fact that the features were selected by a query process is the important part. In order to answer this, we compared our algorithm to using other \textit{automatic} feature selection techniques.  
We checked two feature selection methods - filter and backward elimination. For each training set, the number of features that the method was instructed to select was the same as the number of features chosen by the users on that set. The results are presented in figure  \ref{fig:feature_selection_error}.  The training scores of the automatic feature selection techniques are much lower than the training score of using the entire feature space (and much more similar to those of our method already for small training sets). This fact is reasonable, as we use a much smaller hypothesis class. If we look at the test scores it can be seen that using other feature selection techniques does improve the test score a little when compared to no feature selection at all, but still lies well under the score of our query acquired features method.

Another feature selection method that we compared our results to was using a SVM classifier with $\| \cdot\|_{1}$-regularization, which is known to induce sparsity. Here again, using our query acquired feature set outperforms in all of the measurements. 

\begin{figure}[p]
\centering

\subfigure[Averaged F1 train scores]{
\includegraphics[width=.55\textwidth]{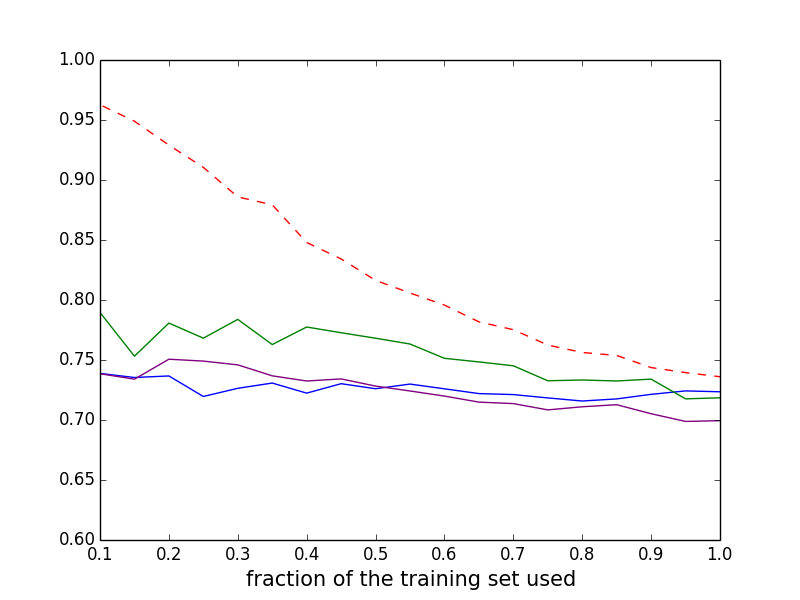}
}
\subfigure[Averaged F1 test scores]{
\includegraphics[width=.55\textwidth]{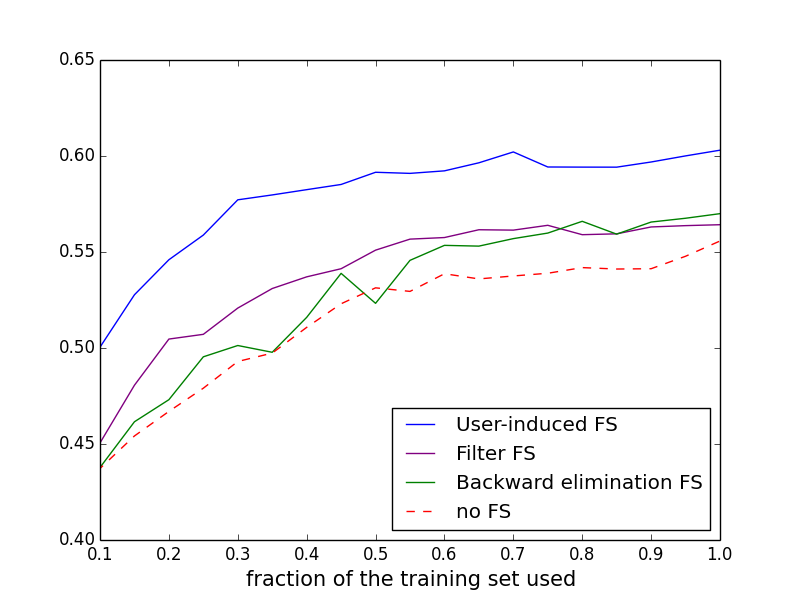}
}
\caption{Train (top) and test (bottom) averaged F1-scores for our method (blue) compared to other automatic feature selection techniques -- filter method (purple) and backward elimination (green) -- as well as no feature selection (dashed red).}
\label{fig:feature_selection_error}
\end{figure}

\pagebreak
\section{Conclusion and Future Work}
We have presented both theoretical and empirical evidence that local-membership queries are useful and beneficial.
In the theoretical setup we have shown that even 1-local queries are stronger than the vanilla PAC model in an arguably natural problem.
In the empirical setup we have demonstrated that by getting additional information from the users, significantly better results can be achieved.
Moreover, the data in the experiment was created using crowdsourcing, and by asking very simple questions. This shows that getting extra knowledge can be an easy task.

Today, the use of the MQ model in practice is almost non-existent. Even the more popular models of active learning, pool-based or stream-based, are fairly rare. E.g., in a recent survey of annotation projects for natural language processing tasks, only $20\%$ of the respondents stated they had ever decided to use active learning \citep{tomanek2009web}.
It seems that there is plenty of room for incorporating more profound human knowledge to the field of machine learning, especially since today this knowledge can be collected quite easily.

More concrete directions for future work include: developing, implementing and analyzing more algorithms that use (local) membership queries and investigating the strength and limitations of the general $O(1)$-local queries model. Some examples of open questions: Is the use of 2-local queries stronger than the use of 1-local queries on a natural environment? What are the limitations of a model that uses $O(1)$-local queries with comparison to the model of \citep{awasthi2012learning} that uses $\log(n)$-local queries?

\newpage
 
\bibliography{my_bib}

\begin{thebibliography}{}

\bibitem[Angluin, 1987]{angluin1987learning}
Angluin, D. (1987).
\newblock Learning regular sets from queries and counterexamples.
\newblock {\em Information and computation}, 75(2):87--106.

\bibitem[Angluin and Kharitonov, 1995]{angluin1995won}
Angluin, D. and Kharitonov, M. (1995).
\newblock When won′ t membership queries help?
\newblock {\em Journal of Computer and System Sciences}, 50(2):336--355.

\bibitem[Angluin et~al., 1997]{angluin1997malicious}
Angluin, D., Kri{\c{k}}is, M., Sloan, R.~H., and Tur{\'a}n, G. (1997).
\newblock Malicious omissions and errors in answers to membership queries.
\newblock {\em Machine Learning}, 28(2-3):211--255.

\bibitem[Angluin and Slonim, 1994]{angluin1994randomly}
Angluin, D. and Slonim, D.~K. (1994).
\newblock Randomly fallible teachers: Learning monotone dnf with an incomplete
  membership oracle.
\newblock {\em Machine Learning}, 14(1):7--26.

\bibitem[Awasthi et~al., 2012]{awasthi2012learning}
Awasthi, P., Feldman, V., and Kanade, V. (2012).
\newblock Learning using local membership queries.
\newblock {\em arXiv preprint arXiv:1211.0996}.

\bibitem[Balcan et~al., 2006]{balcan2006agnostic}
Balcan, M.-F., Beygelzimer, A., and Langford, J. (2006).
\newblock Agnostic active learning.
\newblock In {\em Proceedings of the 23rd international conference on Machine
  learning}, pages 65--72. ACM.

\bibitem[Barbosa and Feng, 2010]{barbosa2010robust}
Barbosa, L. and Feng, J. (2010).
\newblock Robust sentiment detection on twitter from biased and noisy data.
\newblock In {\em Proceedings of the 23rd International Conference on
  Computational Linguistics: Posters}, pages 36--44. Association for
  Computational Linguistics.

\bibitem[Baum, 1991]{baum1991neural}
Baum, E.~B. (1991).
\newblock Neural net algorithms that learn in polynomial time from examples and
  queries.
\newblock {\em Neural Networks, IEEE Transactions on}, 2(1):5--19.

\bibitem[Baum and Lang, 1992]{baum1992query}
Baum, E.~B. and Lang, K. (1992).
\newblock Query learning can work poorly when a human oracle is used.
\newblock In {\em International Joint Conference on Neural Networks}, volume~8.

\bibitem[Bisht et~al., 2008]{bisht2008learning}
Bisht, L., Bshouty, N.~H., and Khoury, L. (2008).
\newblock Learning with errors in answers to membership queries.
\newblock {\em Journal of Computer and System Sciences}, 74(1):2--15.

\bibitem[Blum et~al., 1995]{blum1995learning}
Blum, A., Chalasani, P., Goldman, S.~A., and Slonim, D.~K. (1995).
\newblock Learning with unreliable boundary queries.
\newblock In {\em Proceedings of the eighth annual conference on Computational
  learning theory}, pages 98--107. ACM.

\bibitem[Blum and Rudich, 1992]{blum1992fast}
Blum, A. and Rudich, S. (1992).
\newblock Fast learning of k-term dnf formulas with queries.
\newblock In {\em Proceedings of the twenty-fourth annual ACM symposium on
  Theory of computing}, pages 382--389. ACM.

\bibitem[Bshouty, 1995]{bshouty1995exact}
Bshouty, N.~H. (1995).
\newblock Exact learning boolean functions via the monotone theory.
\newblock {\em Information and Computation}, 123(1):146--153.

\bibitem[Daniely et~al., 2013]{daniely2013more}
Daniely, A., Linial, N., and Shalev-Shwartz, S. (2013).
\newblock More data speeds up training time in learning halfspaces over sparse
  vectors.
\newblock In {\em Advances in Neural Information Processing Systems}, pages
  145--153.

\bibitem[Daniely et~al., 2014]{daniely2014average}
Daniely, A., Linial, N., and Shalev-Shwartz, S. (2014).
\newblock From average case complexity to improper learning complexity.
\newblock In {\em Proceedings of the 46th Annual ACM Symposium on Theory of
  Computing}, pages 441--448. ACM.

\bibitem[Daniely and Shalev-Shwatz, 2014]{daniely2014complexity}
Daniely, A. and Shalev-Shwatz, S. (2014).
\newblock Complexity theoretic limitations on learning dnf's.
\newblock {\em arXiv preprint arXiv:1404.3378}.

\bibitem[Dasgupta, 2004]{dasgupta2004analysis}
Dasgupta, S. (2004).
\newblock Analysis of a greedy active learning strategy.
\newblock In {\em Advances in neural information processing systems}, pages
  337--344.

\bibitem[Davidov et~al., 2010]{davidov2010enhanced}
Davidov, D., Tsur, O., and Rappoport, A. (2010).
\newblock Enhanced sentiment learning using twitter hashtags and smileys.
\newblock In {\em Proceedings of the 23rd International Conference on
  Computational Linguistics: Posters}, pages 241--249. Association for
  Computational Linguistics.

\bibitem[Druck et~al., 2009]{druck2009active}
Druck, G., Settles, B., and McCallum, A. (2009).
\newblock Active learning by labeling features.
\newblock In {\em Proceedings of the 2009 Conference on Empirical Methods in
  Natural Language Processing: Volume 1-Volume 1}, pages 81--90. Association
  for Computational Linguistics.

\bibitem[Ehrenfeucht and Haussler, 1989]{ehrenfeucht1989learning}
Ehrenfeucht, A. and Haussler, D. (1989).
\newblock Learning decision trees from random examples.
\newblock {\em Information and Computation}, 82(3):231--246.

\bibitem[Feldman, 2009]{feldman2009power}
Feldman, V. (2009).
\newblock On the power of membership queries in agnostic learning.
\newblock {\em The Journal of Machine Learning Research}, 10:163--182.

\bibitem[Freund, 1995]{freund1995boosting}
Freund, Y. (1995).
\newblock Boosting a weak learning algorithm by majority.
\newblock {\em Information and computation}, 121(2):256--285.

\bibitem[Hu and Liu, 2004]{hu2004mining}
Hu, M. and Liu, B. (2004).
\newblock Mining and summarizing customer reviews.
\newblock In {\em Proceedings of the tenth ACM SIGKDD international conference
  on Knowledge discovery and data mining}, pages 168--177. ACM.

\bibitem[Jackson, 1994]{jackson1994efficient}
Jackson, J. (1994).
\newblock An efficient membership-query algorithm for learning dnf with respect
  to the uniform distribution.
\newblock In {\em Foundations of Computer Science, 1994 Proceedings., 35th
  Annual Symposium on}, pages 42--53. IEEE.

\bibitem[Kearns and Valiant, 1994]{kearns1994cryptographic}
Kearns, M. and Valiant, L. (1994).
\newblock Cryptographic limitations on learning boolean formulae and finite
  automata.
\newblock {\em Journal of the ACM (JACM)}, 41(1):67--95.

\bibitem[Kim and Hovy, 2004]{kim2004determining}
Kim, S.-M. and Hovy, E. (2004).
\newblock Determining the sentiment of opinions.
\newblock In {\em Proceedings of the 20th international conference on
  Computational Linguistics}, page 1367. Association for Computational
  Linguistics.

\bibitem[Klivans et~al., 2006]{klivans2006cryptographic}
Klivans, A.~R., Sherstov, A., et~al. (2006).
\newblock Cryptographic hardness for learning intersections of halfspaces.
\newblock In {\em Foundations of Computer Science, 2006. FOCS'06. 47th Annual
  IEEE Symposium on}, pages 553--562. IEEE.

\bibitem[Kouloumpis et~al., 2011]{kouloumpis2011twitter}
Kouloumpis, E., Wilson, T., and Moore, J. (2011).
\newblock Twitter sentiment analysis: The good the bad and the omg!
\newblock {\em Icwsm}, 11:538--541.

\bibitem[Kushilevitz and Mansour, 1993]{kushilevitz1993learning}
Kushilevitz, E. and Mansour, Y. (1993).
\newblock Learning decision trees using the fourier spectrum.
\newblock {\em SIAM Journal on Computing}, 22(6):1331--1348.

\bibitem[Nakov et~al., 2013]{nakov2013semeval}
Nakov, P., Kozareva, Z., Ritter, A., Rosenthal, S., Stoyanov, V., and Wilson,
  T. (2013).
\newblock Semeval-2013 task 2: Sentiment analysis in twitter.

\bibitem[Pak and Paroubek, 2010]{pak2010twitter}
Pak, A. and Paroubek, P. (2010).
\newblock Twitter as a corpus for sentiment analysis and opinion mining.
\newblock In {\em LREC}, volume~10, pages 1320--1326.

\bibitem[Pang and Lee, 2004]{pang2004sentimental}
Pang, B. and Lee, L. (2004).
\newblock A sentimental education: Sentiment analysis using subjectivity
  summarization based on minimum cuts.
\newblock In {\em Proceedings of the 42nd annual meeting on Association for
  Computational Linguistics}, page 271. Association for Computational
  Linguistics.

\bibitem[Pang and Lee, 2008]{pang2008opinion}
Pang, B. and Lee, L. (2008).
\newblock Opinion mining and sentiment analysis.
\newblock {\em Foundations and trends in information retrieval}, 2(1-2):1--135.

\bibitem[Raghavan and Allan, 2007]{raghavan2007interactive}
Raghavan, H. and Allan, J. (2007).
\newblock An interactive algorithm for asking and incorporating feature
  feedback into support vector machines.
\newblock In {\em Proceedings of the 30th annual international ACM SIGIR
  conference on Research and development in information retrieval}, pages
  79--86. ACM.

\bibitem[Raghavan et~al., 2005]{raghavan2005interactive}
Raghavan, H., Madani, O., and Jones, R. (2005).
\newblock Interactive feature selection.
\newblock In {\em IJCAI}, volume~5, pages 841--846.

\bibitem[Settles, 2010]{settles2010active}
Settles, B. (2010).
\newblock Active learning literature survey.
\newblock {\em University of Wisconsin, Madison}, 52(55-66):11.

\bibitem[Settles, 2011]{settles2011closing}
Settles, B. (2011).
\newblock Closing the loop: Fast, interactive semi-supervised annotation with
  queries on features and instances.
\newblock In {\em Proceedings of the Conference on Empirical Methods in Natural
  Language Processing}, pages 1467--1478. Association for Computational
  Linguistics.

\bibitem[Sloan and Tur{\'a}n, 1994]{sloan1994learning}
Sloan, R.~H. and Tur{\'a}n, G. (1994).
\newblock Learning with queries but incomplete information.
\newblock In {\em Proceedings of the seventh annual conference on Computational
  learning theory}, pages 237--245. ACM.

\bibitem[Tomanek and Olsson, 2009]{tomanek2009web}
Tomanek, K. and Olsson, F. (2009).
\newblock A web survey on the use of active learning to support annotation of
  text data.
\newblock In {\em Proceedings of the NAACL HLT 2009 Workshop on Active Learning
  for Natural Language Processing}, pages 45--48. Association for Computational
  Linguistics.

\bibitem[Valiant, 1984]{valiant1984theory}
Valiant, L.~G. (1984).
\newblock A theory of the learnable.
\newblock {\em Communications of the ACM}, 27(11):1134--1142.

\end{thebibliography}
\bibliographystyle{apalike}
\end{document}